%% file: main.tex
\documentclass{article}

\usepackage[preprint]{neurips_2026}

\usepackage[utf8]{inputenc}
\usepackage[T1]{fontenc}
\usepackage{hyperref}
\usepackage{url}
\usepackage{booktabs}
\usepackage{amsfonts}
\usepackage{nicefrac}
\usepackage{microtype}
\usepackage{xcolor}

\usepackage{amsmath}
\usepackage{amssymb}
\usepackage{amsthm}
\usepackage{graphicx}
\usepackage{subcaption}
\usepackage{algorithm}
\usepackage{algorithmic}
\usepackage{enumitem}
\usepackage{multirow}
\usepackage{wrapfig}

\newtheorem{theorem}{Theorem}

\newcommand{\Expect}{\mathbb{E}}

\newcommand{\KG}{\mathcal{K}}
\newcommand{\Gcap}{{G_{\mathrm{cap}}}}
\newcommand{\Gtask}{{G_{\mathrm{task}}}}
\newcommand{\Gexp}{{G_{\mathrm{exp}}}}
\newcommand{\Genv}{{G_{\mathrm{env}}}}
\newcommand{\Lguide}{{\mathcal{L}_{\mathrm{G}}}}
\newcommand{\Lexec}{{\mathcal{L}_{\mathrm{E}}}}
\newcommand{\framework}{\textsc{Mage}}
\newcommand{\evokg}{\textsc{EvoKG}}
\title{MAGE: Multi-Agent Self-Evolution with Co-Evolutionary Knowledge Graphs}

\author{
 \textbf{Ruiyi Yang\textsuperscript{1}},
 \textbf{Zechen Li\textsuperscript{1}}, \\
 \textbf{Hao Xue\textsuperscript{1,2}},
 \textbf{Imran Razzak\textsuperscript{1,3}},
 \textbf{Flora D. Salim\textsuperscript{1}},
\\
 \textsuperscript{1}University of New South Wales, \\
 \textsuperscript{2}The Hong Kong University of Science and Technology
(Guangzhou), \\
\textsuperscript{3}Mohamed Bin Zayed University of Artificial Intelligence,\\
 \small{
   \textbf{Correspondence:} \href{mailto:ruiyi.yang@unsw.edu.au}{ruiyi.yang@unsw.edu.au}
 }
}

\begin{document}

\maketitle

\input{sections/abstract}
\input{sections/introduction}
\input{sections/related_work}
\input{sections/method}
\input{sections/experiments}
\input{sections/conclusion}

% ── acknowledgments (hidden during review) ──
% \begin{ack}
% \end{ack}

\bibliographystyle{plain}
\bibliography{references}

%%%%%%%%%%%%%%%%%%%%%%%%%%%%%%%%%%%%%%%%%%%%%%%%%%%%%%%%%%%%
%% Appendix (technical, does not count toward page limit)
\newpage
\appendix
\input{sections/appendix}

%%%%%%%%%%%%%%%%%%%%%%%%%%%%%%%%%%%%%%%%%%%%%%%%%%%%%%%%%%%%
%% Force all remaining floats out, then start checklist on new page
\clearpage

%% NeurIPS Paper Checklist (mandatory, after appendix, not content pages)
% {
% \def\thesection{}          % suppress section numbering inside checklist
% \input{checklist}
% }

\end{document}

%% file: sections/abstract.tex
\begin{abstract}
Self-evolving language-model agents must decide what to learn next and how to preserve what they have learned across iterations. Existing systems typically carry this cross-iteration knowledge as natural-language feedback, flat episodic memory, or implicit reinforcement signals, none of which cleanly supports a frozen weak backbone at inference time. This paper introduces \framework{} (\textbf{M}ulti-\textbf{A}gent \textbf{G}raph-guided \textbf{E}volution), a framework that externalizes self-knowledge into a four-subgraph co-evolutionary knowledge graph. Its experience subgraph stores both teacher-written failure corrections and the learner's own past correct reasoning traces, which are retrieved as task-conditioned guidance for a frozen execution model. During evolution, the graph, a task-level search bandit, and a skill-level routing bandit are updated from the same reward stream, while the learner's backbone remains unchanged. We further provide structural analysis showing how append-only memory growth, bounded curriculum coverage, and task-filtered retrieval together support stable improvement of the retrieval substrate for frozen-learner evolution. Across nine benchmarks spanning mathematical reasoning, multi-hop and open-domain question answering, spatio-temporal analysis, financial numerical reasoning, medical multiple-choice, an open-world survival game, and web navigation, \framework{} achieves strong performance against prompt-based frozen-backbone baselines. Ablations show that self-harvested success traces and teacher-written corrections are complementary, with success memories contributing most on reasoning-template-heavy tasks and corrective memories supporting harder composition and interaction settings.  Code is available in anonymous.4open.science/r/mage-anonymous
\end{abstract}

%% file: sections/introduction.tex
\section{Introduction}
\label{sec:intro}

Self-evolving language-model agents aim to improve from their own experience after deployment. 
Most recent systems instantiate this idea with a two-actor loop: one component explores or proposes tasks, while another component is updated from the resulting trajectories. 
The update mechanism itself may vary, such as supervised fine-tuning, reinforcement learning, self-play, or retrieval-time adaptation. However, these systems face a common upstream question: \emph{what experience should be retained across iterations, and in what form should it be represented?}
Existing self-evolving agents typically answer this question by storing cross-iteration knowledge in one of three forms. 
Natural-language feedback~\citep{yangtoward,shinn2023reflexion} is easy to generate but can collapse into generic advice when the teacher is weak or the failure mode is systematic. 
Flat episodic memory~\citep{wu2025evolver,zhang2026memrl,xu2025mem,allard2026experiential} stores trajectories or principles, but provides little structure for curriculum selection or dependency-aware reuse. 
Implicit reinforcement signals~\citep{wang2025ragen,chen2025multi,chen2025mars,li2025agentic} can drive parameter updates, but make the learned curriculum difficult to inspect and require a malleable student model. 
Motivated by these limitations, this paper asks:
\emph{Can the carrier of self-evolution be moved from model parameters or free-form feedback into a structured graph, so that a frozen execution backbone can improve across iterations?}

\begin{figure}[!htbp]
\centering
\includegraphics[width=0.96\textwidth]{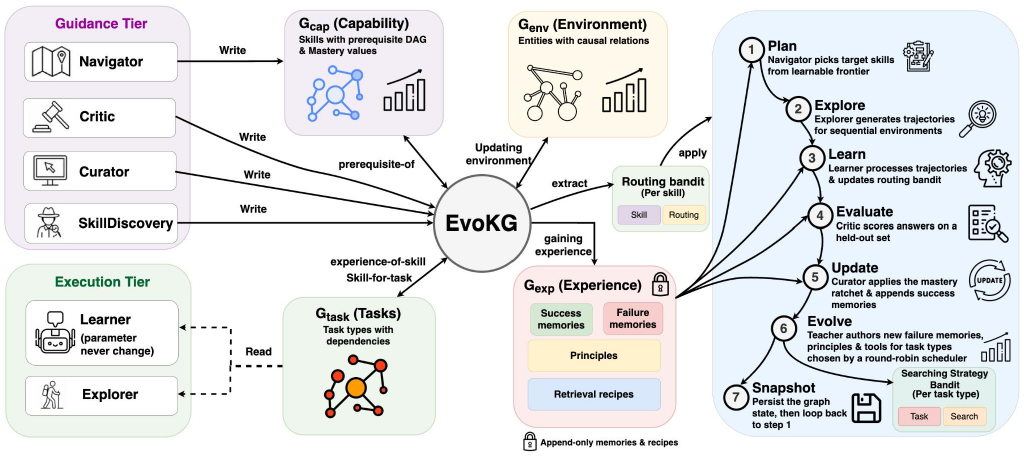}
\caption{
The \framework{} framework.
A strong \emph{guidance tier} ($\Lguide$) writes to the four-subgraph co-evolutionary knowledge graph (\evokg{}), while a frozen \emph{execution tier} ($\Lexec$) answers questions through a Learner that consults a dual success/failure memory index, a task-conditioned search-strategy bandit, and a per-skill routing bandit.
Correct answers are harvested back into the graph as success memories.
The graph and the two bandits carry the cross-iteration learning signal; the execution backbone remains frozen.
}
\label{fig:framework}
\vspace{-1em}
\end{figure}

We introduce \framework{} (\textbf{M}ulti-\textbf{A}gent \textbf{G}raph-guided \textbf{E}volution), a framework that externalizes self-knowledge into a four-subgraph co-evolutionary knowledge graph, \evokg{}. 
The graph contains capability, task, experience, and environment subgraphs. 
Its experience subgraph stores two complementary memory types: \emph{failure memories}, written by a strong guidance tier from evaluation errors, and \emph{success memories}, harvested from the frozen execution tier's own correct reasoning traces. 
At inference time, the learner retrieves task-conditioned memories and uses them as guidance for the frozen backbone. 
Across iterations, three state variables co-evolve under the same per-question reward stream: (i) the graph accumulates structured experience, (ii) a task-level search bandit learns retrieval strategies, and (iii) a skill-level routing bandit selects inference-time strategies. 
Thus, correct answers become future demonstrations, failures become future corrections, and the policies that produced them are updated in the same loop.

Our analysis formalizes the structural conditions under which this external memory substrate can support self-evolution in the frozen-backbone setting. 
Append-only growth of protected experience nodes yields information-monotonic graph growth; a recency-weighted round-robin curriculum gives a bounded coverage gap over observed task types; and an asymmetric mastery update bounds per-step forgetting. 
In conjunction with task-filtered retrieval, these properties provide a structural justification for using \evokg{} as the carrier of cross-iteration learning, while the empirical results test how this design behaves with real language models.

\paragraph{Contributions.}
\begin{enumerate}[leftmargin=*,itemsep=1pt]
    \item \textbf{Co-evolutionary graph memory for self-evolving agents.}
    We propose \evokg{}, a four-subgraph knowledge graph that represents capabilities, tasks, experience, and environment context, and preserves append-only success, failure, and principle memories as the external carrier of learning.

    \item \textbf{Frozen-backbone co-evolution through graph and bandit updates.}
    We couple the graph with a task-level search bandit and a skill-level routing bandit, all updated from the same per-question reward stream while the execution backbone remains unchanged.

    \item \textbf{Structural analysis of memory growth, coverage, and stability.}
    We prove information-monotonicity of append-only graph growth, a bounded coverage gap for the round-robin curriculum, and a stability bound for the asymmetric mastery ratchet; under task-filtered retrieval, these results characterize when the retrieval substrate can improve without parameter updates.

    \item \textbf{Evaluation across heterogeneous agent tasks.}
    We evaluate \framework{} on nine benchmarks spanning mathematical reasoning, multi-hop and open-domain question answering, spatio-temporal analysis, financial numerical reasoning, medical multiple-choice reasoning, open-world survival, and web navigation.
    The results show strong performance against frozen-backbone prompting baselines, and ablations show that self-harvested success memories provide the largest gains on reasoning-template-heavy tasks while teacher-written corrections remain complementary in other settings.
\end{enumerate}

%% file: sections/related_work.tex
\section{Related Work}
\label{sec:related}

\paragraph{Self-evolving language-model agents.}
Self-evolving agents aim to improve through iterative interaction, evaluation, and update.
\textsc{Exif}~\citep{yangtoward} instantiates this loop with a teacher-student design. A teacher explores an environment, generates instruction-trajectory pairs, supervises a student through SFT, and produces natural-language feedback for subsequent rounds.
Recent systems such as \textsc{Agent0}~\citep{xia2025agent0}, \textsc{AgentEvolver}~\citep{zhai2025agentevolver}, \textsc{Sage}~\citep{peng2026sage}, \textsc{Mae}~\citep{chen2025multi}, \textsc{Sirius}~\citep{zhao2025sirius}, \textsc{Absolute Zero}~\citep{zhao2025absolute}, and \textsc{InfiAgent}~\citep{yu2025infiagent} further explore curriculum generation, self-play, proposer-solver co-evolution, and reinforcement learning for agent improvement.
These methods demonstrate that iterative experience can improve agents, but the accumulated knowledge is usually carried either by parameter updates or by verbal feedback rather than by an explicit structured state.

\paragraph{Knowledge graphs and language-model agents.}
Knowledge graphs have been used to support language-model agents in retrieval, reasoning, and domain organization.
GraphRAG~\citep{edge2024local} and related graph-based retrieval methods organize external evidence for question answering, while \textsc{AgentiGraph}~\citep{zhao2025agentigraph} builds graph structures for domain-specific interactive agents.
\textsc{Agentic-KGR}~\citep{li2025agentic} studies multi-agent reinforcement learning for constructing knowledge graphs, and \textsc{Voyager}~\citep{wang2023voyager} maintains a growing skill library during open-ended embodied exploration.
These works show the utility of graph-structured or library-like external state, although the graph is typically used as a retrieval artifact, a constructed output, or a skill repository rather than as the evolving substrate that organizes cross-iteration learning.

\paragraph{Experience-driven memory.}
A growing line of work equips agents with non-parametric memory.
\textsc{EvolveR}~\citep{wu2025evolver} distills trajectories into abstract principles, \textsc{MemRL}~\citep{zhang2026memrl} learns over an episodic-memory store, and \textsc{A-Mem}~\citep{xu2025mem}, \textsc{Erl}~\citep{allard2026experiential},\textsc{SPLIT-RAG}~\citep{yang2025divide}, and \textsc{ReMem}~\citep{wei2025evo} explore dynamically indexed memories over experiences, principles, or triplets.
These approaches externalize part of the learning process from model parameters and make past experience available at inference time.
However, most memory stores are organized as flat collections whose entries are retrieved by similarity, with limited structure for representing capability dependencies, task coverage, or the different roles of successful and failed trajectories.

\paragraph{Curriculum, skill discovery, and dynamic prompting.}
Curriculum and skill discovery are central to open-ended agents.
\textsc{Voyager}~\citep{wang2023voyager} grows a code-based skill library, \textsc{SeAgent}~\citep{sun2025seagent} learns software use through an auto-generated curriculum, and proposer--solver systems such as \textsc{Agent0} and \textsc{Absolute Zero} construct tasks through self-play or adversarial generation.
In parallel, prompting methods such as self-consistency~\citep{wang2022self}, ReAct~\citep{yao2022react}, and Reflexion~\citep{shinn2023reflexion} improve inference-time behavior through sampling, tool use, or verbal self-reflection.
These lines highlight the importance of deciding what experience to expose to the model, but their prompts, memories, or curricula are usually fixed, locally retrieved, or generated without an explicit coverage guarantee over task types.

\paragraph{Position of \framework{}.}
\framework{} combines these threads by treating the agent's persistent state as a co-evolutionary knowledge graph rather than as free-form feedback, a flat memory bank, or an implicit reward signal.
The graph links capabilities, task types, experience, and environment state; separates teacher-written failure memories from self-harvested success traces; and co-evolves with a search bandit and a routing bandit under the same reward stream.
This design makes the graph the carrier of cross-iteration learning while keeping the execution backbone frozen, and uses graph structure to couple what is learned, what is retrieved, and which task types are revisited.

%% file: sections/method.tex
\section{Method}
\label{sec:method}

\framework{} keeps the inference-time backbone frozen and stores cross-iteration learning in an explicit, inspectable graph. 
The graph is a typed directed multigraph $\KG = (\Gcap, \Gtask, \Gexp, \Genv)$ whose protected experience nodes include guidance-written principles, failure memories from evaluation errors, and success memories harvested from correct learner answers.
Each protected node is tagged with the task type it resolves and the skill it exercises, enabling task- and capability-conditioned retrieval.
At inference time, the Learner does not call the guidance tier or update model parameters; it conditions the frozen backbone on a task-filtered slice of $\Gexp$ retrieved by embedding similarity.
The remainder of this section defines the graph substrate (\S\ref{sec:method:evokg}), the two-tier architecture and dual memory index (\S\ref{sec:method:tier}), the evolution loop (\S\ref{sec:method:loop}), the curriculum and mastery updates (\S\ref{sec:method:curriculum}), and the inference-time retrieval rule (\S\ref{sec:method:cond}); full proofs and implementation details are deferred to Appendices~\ref{app:proofs}--\ref{app:agents}.

\subsection{The Co-Evolutionary Knowledge Graph}
\label{sec:method:evokg}

The co-evolutionary knowledge graph is a typed directed multigraph $\KG = (\Gcap, \Gtask, \Gexp, \Genv)$.
The capability subgraph $\Gcap$ stores skills with mastery values in $[0,1]$ and prerequisite edges.
The task subgraph $\Gtask$ stores task types with dependency edges and a $\textsc{skill\_for\_task}: \Gcap \to \Gtask$ resolver.
The experience subgraph $\Gexp$ stores five classes of experience node: \textbf{principles}, \textbf{failure memories} authored by the guidance tier, \textbf{success memories} harvested from the frozen learner's own correct reasoning, retrieval recipes, and abstracted patterns.
The environment subgraph $\Genv$ stores observed environment entities, relations, and task context used by the agent.

The curator preserves an append-only invariant on the three protected experience classes: principle, failure-memory, and success-memory nodes are never removed once committed, even when mutable working slots such as mastery values, prompt templates, or strategies are revised.
This invariant gives the graph a monotone protected memory core, which later supports the retrieval-side analysis.

\begin{theorem}[\evokg{} Information Monotonicity, Appendix~\ref{app:thm_monotone}]
\label{thm:monotone}
Under the append-only invariant on principle, failure-memory, and success-memory nodes, $I(Y; \KG_k) \leq I(Y; \KG_{k+1})$ for all $k$, where $Y$ is the answer random variable on a fixed task distribution and $\KG_k$ is the graph state at the end of iteration $k$.
\end{theorem}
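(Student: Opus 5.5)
The plan is to reduce the statement to the data-processing inequality by showing that $\KG_k$ is a deterministic function of $\KG_{k+1}$, at least as far as the quantities that can carry information about $Y$ are concerned. The first step is to make the random variables precise. Fix the task distribution and a question $Q$ with answer $Y$. Treat the graph state as a random object generated by the evolution loop. I will decompose each state as $\KG_k = (P_k, M_k)$, where $P_k$ is the protected core: the set of principle, failure-memory and success-memory nodes, with their task and skill tags. The component $M_k$ collects the mutable working slots, namely mastery values, prompt templates, strategies, recipes and patterns.

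The append-only invariant gives $P_k \subseteq P_{k+1}$ with entries unchanged. If nodes carry a commit timestamp (iteration index), which I will assume is part of the node record, then $P_k = \{v \in P_{k+1} : \mathrm{iter}(v) \le k\}$ is a deterministic function $f(P_{k+1})$. Hence $Y \to P_{k+1} \to P_k$ is a Markov chain. The data-processing inequality then gives $I(Y;P_k) \le I(Y;P_{k+1})$. Equivalently, by the chain rule, $I(Y;P_{k+1}) = I(Y;P_k) + I(Y;P_{k+1}\setminus P_k \mid P_k) \ge I(Y;P_k)$, since conditional mutual information is nonnegative.

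The main obstacle is the mutable part $M_k$. Mastery values and strategies are overwritten, so $M_k$ is generally not recoverable from $\KG_{k+1}$, and $I(Y;\KG_k)\le I(Y;\KG_{k+1})$ can genuinely fail if $M_k$ carries answer-relevant information that is later lost. I see two ways to close this gap, and I would state the first as the reading of the theorem:

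\emph{Option (a).} Interpret $I(Y;\KG_k)$ as the information accessible to the Learner's retrieval. Retrieval conditions only on the task-filtered slice of $\Gexp$, so it reads only $P_k$. Under this interpretation the mutable slots are excluded, and the argument above finishes the proof.

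\emph{Option (b).} Assume the curator logs every revision of a mutable slot as an appended event in $\Gexp$. This is plausible, since retrieval recipes and patterns are appended as well. The full history is then itself append-only and the same function-of-the-successor argument applies to all of $\KG$.

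I would present (a) as the theorem's precise meaning, remark that (b) recovers the literal statement, and note explicitly that without one of these the inequality holds for the protected core but not for arbitrary overwritten state.

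Finally I would note that the inequality is only monotonicity, not strict growth. Equality holds when the newly appended nodes are conditionally independent of $Y$ given $P_k$, for example redundant success memories.
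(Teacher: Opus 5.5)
Your core step is the paper's: the chain rule $I(Y;\KG_{k+1}) = I(Y;\KG_k) + I(Y;\Delta_k \mid \KG_k)$ (equivalently, data processing) together with nonnegativity of conditional mutual information. Where you differ is the object you apply it to, and that difference matters.

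The paper writes $\KG_{k+1} = (\KG_k,\Delta_k)$ for the entire graph. It justifies this in two ways. First, it notes that mastery, prompt-template and strategy slots are ``updated in place but the corresponding nodes themselves are not deleted.'' Second, it stipulates that pruned low-confidence abstracted-pattern nodes are conditionally independent of $Y$ given the rest of $\KG_k$. Node persistence does not make the overwritten slot values of $\KG_k$ recoverable from $\KG_{k+1}$. So that decomposition holds only when the graph state is read as its append-only part, which is essentially your $P_k$.

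Your conclusion that the literal inequality needs either your reading (a) or an extra hypothesis is therefore correct. Your option (b) has the same status as the paper's independence stipulation. Under it, the pruner's deletions of pattern nodes must be logged as well, and the log itself must be exempt from pruning.

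You also make explicit something the paper's $(\KG_k,\Delta_k)$ notation assumes silently: $P_k$ must be a function of $P_{k+1}$, which your commit timestamps supply. This is not a technicality. Take $Y$ uniform on $\{0,1\}$, with $P_k=\{a\}$ if $Y=0$, $P_k=\{b\}$ if $Y=1$, and $P_{k+1}=\{a,b\}$ always. Containment holds, yet $I(Y;P_k)=1$ bit while $I(Y;P_{k+1})=0$.

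So the paper reaches the literal statement for all of $\KG_k$ only through an identification that fails for overwritten state. Yours proves a narrower statement rigorously. Nothing downstream is lost, since Step~1 of the proof of Theorem~\ref{thm:frozen} uses only the persistence of $t$-tagged protected nodes.

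One small imprecision in (a): $\Gexp$ also holds unprotected recipe and pattern nodes. So ``retrieval reads only $P_k$'' follows from $\rho_k$ drawing on the success and failure indices (Appendix~\ref{app:cond_inj}), not from the task filter over $\Gexp_k$ alone.
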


\subsection{Two-Tier Architecture and the Dual Memory Index}
\label{sec:method:tier}
\label{sec:method:dualmem}

\framework{} separates graph writing from inference-time execution.
The \emph{guidance tier} $\Lguide$ is a stronger model used only for graph writes, managing skill discovery, principle extraction, failure-memory authoring, dynamic-tool generation, retrieval-recipe authoring, and Navigator refinement.
The \emph{execution tier} $\Lexec$ is an open-weight frozen model used for trajectory generation and inference-time question answering.
Thus, $\Lguide$ is absent from the inference path, and $\Lexec$ never receives parameter updates.

The experience subgraph is indexed by two parallel embedding stores with shared retrieval logic but distinct payloads.
The \textbf{failure index} stores teacher-written worked examples that provide corrective reasoning for observed errors.
The \textbf{success index} stores the learner's own correct reasoning traces, and for multi-step questions, an explicit $(\text{skill}, \text{step output})$ decomposition that harvested from correctly solved evolution examples.
These traces allow iteration $N$'s correct answers to become iteration $N{+}1$'s demonstrations.

Finally, two lightweight bandits adapt how the graph is used: a per-skill \textbf{routing bandit} over inference strategies, and a per-task-type \textbf{search bandit} over retrieval strategies.
Both update from the same per-question correctness reward that drives graph writes, so memory growth, search adaptation, and routing adaptation co-evolve while $\Lexec$ remains frozen.

\subsection{Evolution Loop}
\label{sec:method:loop}

Six role-specific agents cooperate through the loop in Algorithm~\ref{alg:loop}: SkillDiscovery, Navigator, Explorer, Learner, Critic, and Curator.
Their tier assignments and roles are listed in Appendix~\ref{app:agents}.
Figure~\ref{fig:iteration_trace} illustrates one iteration of memory writes, bandit updates, and next-iteration retrieval.
A delta-guard protects the mutable working state of the graph.
If the post-update accuracy drops by more than $\delta$, \framework{} rolls back mutable slots such as mastery values, prompt templates, and strategies.
Protected memory and principle nodes appended during the iteration are retained, preserving the append-only invariant.
The EVOLVE step rotates through four action types: principle extraction, dynamic-tool authoring, prompt refinement, and skill splitting, thus graph growth is not restricted to a single update mode.

\begin{figure}[t]
\centering
\includegraphics[width=0.95\textwidth]{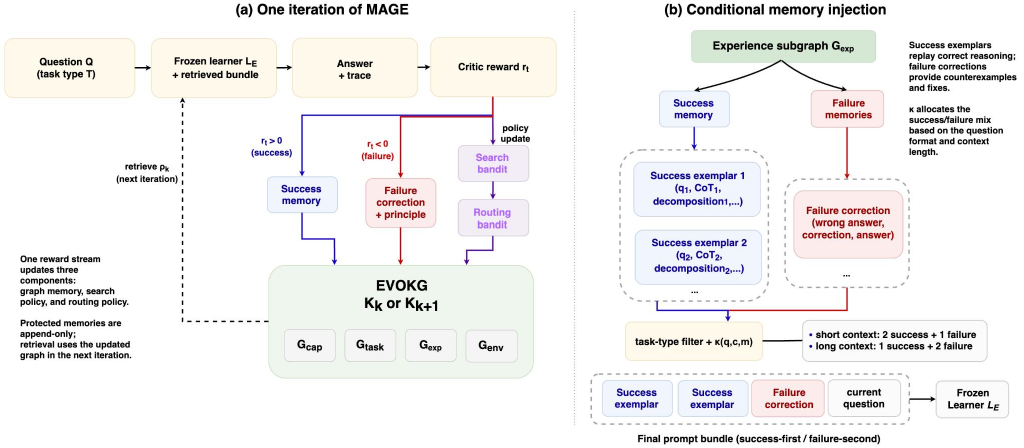}
\caption{
One-iteration co-evolution and conditional memory injection in \framework{}.
The frozen learner answers using a task-conditioned bundle of success exemplars and failure corrections retrieved from \evokg{}.
The critic produces reward $r_t$, which appends correct answers as success memories, triggers guidance-written corrections for errors, and updates the search and routing bandits.
Thus memory growth, retrieval, and policy adaptation are coupled through one reward stream.
}
\label{fig:iteration_trace}
\end{figure}

\begin{algorithm}[t]
\caption{\framework{} evolution loop (one iteration).}
\label{alg:loop}
\small
\begin{algorithmic}[1]
\STATE $\mathcal{P}_k \gets \textsc{Navigator}(\KG_{k-1})$ \hfill \COMMENT{PLAN}
\STATE $\mathcal{D}_k \gets \textsc{Explorer}(\mathcal{P}_k, \mathcal{E}, \Lexec)$ \hfill \COMMENT{EXPLORE; sequential environments only}
\STATE $E_k, \mathcal{S}_k \gets \textsc{Critic}\bigl(\textsc{Learner}(\KG_{k-1}), \mathcal{E}\bigr)$ \hfill \COMMENT{EVALUATE; harvest success memories}
\STATE $\KG_k' \gets \textsc{Curator}.\textnormal{update}(\KG_{k-1}, E_k, \mathcal{S}_k)$ \hfill \COMMENT{UPDATE; ratchet and append $\mathcal{S}_k$}
\STATE $\KG_k \gets \textsc{Evolve}(\KG_k', E_k, k, \Lguide)$ \hfill \COMMENT{EVOLVE; write failures, principles, tools}
\STATE \textsc{Snapshot}$(\KG_k)$; roll back mutable slots if $\textnormal{acc}(\KG_k) < \textnormal{acc}(\KG_{k-1}) - \delta$
\end{algorithmic}
\end{algorithm}

\subsection{Curriculum and Mastery Updates}
\label{sec:method:curriculum}
\label{sec:method:robin}
\label{sec:method:ratchet}

The Navigator first computes the \emph{learnable frontier} of $\Gcap$: skills whose prerequisites are mastered but whose own mastery remains below threshold.
The EVOLVE phase then selects which task types receive new failure memories and principles.
A purely failure-count-based selector can repeatedly focus on the same high-error task types, so \framework{} uses a recency-weighted round-robin score,
\begin{equation}
\label{eq:roundrobin}
s(t) \;=\; n_{\text{fail}}(t) + \lambda \bigl(k - k_{\text{last}}(t)\bigr), \qquad \lambda > 0,
\end{equation}
and selects the top-$M$ scoring task types, where $n_{\text{fail}}(t)$ is the failure count for task type $t$ and $k_{\text{last}}(t)$ is the most recent iteration in which $t$ was selected.

The Curator updates per-skill mastery using an asymmetric exponential moving average that increases quickly after successful evidence but decays slowly after poor measurements:
\begin{equation}
\label{eq:ratchet}
m_{k} =
\begin{cases}
\alpha\, e_k + (1 - \alpha)\, m_{k-1}, & e_k \geq m_{k-1}, \\
m_{k-1} - \gamma\,(m_{k-1} - e_k),     & e_k <    m_{k-1},
\end{cases}
\qquad \alpha > \gamma \in (0,1),
\end{equation}
where $e_k$ is the Critic's measured success rate.
The selector and mastery update give two structural properties.

\begin{theorem}[Bounded Coverage Gap, Appendix~\ref{app:thm_coverage}]
\label{thm:coverage}
Under Eq.~\eqref{eq:roundrobin} with at most $M$ targets per iteration, every observed task type $t$ is reselected within $\lceil n_{\max}/\lambda + N/M \rceil$ iterations, where $N$ is the number of observed task types and $n_{\max} = \max_t n_{\text{fail}}(t)$.
\end{theorem}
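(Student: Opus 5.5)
Fix an observed task type $t$ and let $k_0 = k_{\text{last}}(t)$ be the iteration at which it was last selected. The argument has three parts: a score-comparison step showing that $t$ eventually dominates every competitor with a larger staleness, a counting step bounding how many iterations can pass before $t$ falls in the top $M$, and a final combination. Throughout, I take the failure counts $n_{\text{fail}}(\cdot)$ to lie in $[0, n_{\max}]$, so that any two task types' failure terms differ by at most $n_{\max}$. Ties in the top-$M$ selection may be broken arbitrarily; I will argue with strict inequalities wherever they are needed.

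\emph{Score comparison.} At iteration $k > k_0$ the score of $t$ is at least $\lambda(k-k_0)$. Any competitor $u$ has score at most $n_{\max} + \lambda\bigl(k - k_{\text{last}}(u)\bigr)$. It follows that $t$ strictly beats every $u$ with
\[
k_{\text{last}}(u) > k_0 + n_{\max}/\lambda ,
\]
that is, every $u$ that was selected more than $n_{\max}/\lambda$ iterations after $t$. Call such $u$ ``fresh.'' So once $t$ has waited $n_{\max}/\lambda$ iterations past $k_0$, only task types that have not been selected since iteration $k_0 + \lceil n_{\max}/\lambda\rceil$ can still outrank $t$.

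\emph{Counting argument.} Set $k_1 = k_0 + \lceil n_{\max}/\lambda \rceil$ and suppose $t$ is not selected in any of the iterations $k_1+1, \dots, k_1+J$. In each such iteration, all $M$ selected types outrank or tie $t$. By the score comparison, none of them is fresh, so each was last selected at or before $k_1$. Once a type is selected in one of these iterations it becomes fresh for every later iteration. Hence each of the at most $N-1$ other types can be selected at most once in this window. This gives $MJ \le N-1$, so $J < N/M$, and $t$ must be selected by iteration $k_1 + \lceil N/M\rceil$.

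\emph{Combining.} The gap since the last selection is therefore at most $\lceil n_{\max}/\lambda\rceil + \lceil N/M\rceil$. This falls short of the stated $\lceil n_{\max}/\lambda + N/M\rceil$ by an off-by-one coming from the two ceilings. To close it I would tighten the freshness threshold. Let $u$ be selected at $k_0 + j$ with $j \ge 1$ for a later comparison iteration; then $t$ beats $u$ once $\lambda j > n_{\max}$. The cleaner route is to do the counting directly in real time: the window of length $n_{\max}/\lambda + N/M$ contains at least $N$ selection slots after the freshness horizon, and only $N-1$ competitors can occupy them. I expect this bookkeeping, together with the tie handling and the assumption that $n_{\text{fail}}$ is frozen or bounded by $n_{\max}$ throughout the window, to be the main obstacle. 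If exactness fails, I would state the bound as the sum of two ceilings, which the paper's bound matches up to $+1$.
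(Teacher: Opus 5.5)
Your proposal is correct and follows the same two-phase route as the paper. First comes recency dominance once $t$ has waited $n_{\max}/\lambda$ iterations; then comes a sweep in which each competitor can fill a top-$M$ slot at most once, forcing $t$ in within $\lceil N/M\rceil$ more iterations. Your ``fresh'' bookkeeping is more rigorous than the paper's informal ``not recently selected pool'' argument, which also tacitly assumes $n_{\text{fail}}$ and $N$ stay fixed over the window.

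The off-by-one you flag closes exactly as your tightened threshold suggests. Any selection at iteration $k_0+\lfloor n_{\max}/\lambda\rfloor+1$ or later already gives $k_{\text{last}}(u)>k_0+n_{\max}/\lambda$. So starting the counting window there gives a gap of at most $\lfloor n_{\max}/\lambda\rfloor+\lceil N/M\rceil\le\lceil n_{\max}/\lambda+N/M\rceil$, and no fallback to the sum of two ceilings is needed.
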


\begin{theorem}[Mastery Ratchet Stability, Appendix~\ref{app:thm_ratchet}]
\label{thm:ratchet}
Under Eq.~\eqref{eq:ratchet}, $m_k \geq (1-\gamma)^j \max_{k' \leq k-j} m_{k'}$ for any window $j$: a structural lower bound on running mastery in terms of any past peak.
\end{theorem}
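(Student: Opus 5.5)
The plan is to derive a one-step lower bound from Eq.~\eqref{eq:ratchet}, iterate it, and then compare with the peak $\max_{k'\le k-j} m_{k'}$; the last step is where the statement, read literally, needs care. Throughout, $e_k\in[0,1]$ and $m_0\in[0,1]$ are assumed, so every $m_k$ stays in $[0,1]$, being a convex combination of points of $[0,1]$.

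\emph{Step 1: one-step floor.} Split into the two branches of Eq.~\eqref{eq:ratchet}.
\begin{itemize}
\item If $e_k\ge m_{k-1}$, then $m_k$ is a convex combination of $e_k$ and $m_{k-1}$ with weight $\alpha\in(0,1)$, so $m_k\ge m_{k-1}$.
\item If $e_k<m_{k-1}$, then $m_k=(1-\gamma)m_{k-1}+\gamma e_k\ge(1-\gamma)m_{k-1}$, since $e_k\ge 0$.
\end{itemize}
In both cases $m_k\ge(1-\gamma)m_{k-1}$. The constraint $\alpha>\gamma$ is not needed here. It only makes recovery faster than decay.

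\emph{Step 2: telescoping.} Induction on $k-k'$ gives $m_k\ge(1-\gamma)^{k-k'}m_{k'}$ for every $k'\le k$. This is the real content of the ratchet: the current mastery is at least a geometrically discounted copy of any earlier mastery value.

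\emph{Step 3: passing to the peak.} Let $k^\ast=\arg\max_{k'\le k-j} m_{k'}$. Step 2 gives $m_k\ge(1-\gamma)^{k-k^\ast}m_{k^\ast}$. To reach the stated form, $(1-\gamma)^{k-k^\ast}$ must be replaced by $(1-\gamma)^j$. Because $k-k^\ast\ge j$ and $1-\gamma<1$, we have $(1-\gamma)^{k-k^\ast}\le(1-\gamma)^j$. This is the wrong direction, so this step is the main obstacle.

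In fact the inequality with a fixed window cannot hold for all $j$. Take $m_0=1$ and $e_k\equiv 0$. Then $m_k=(1-\gamma)^k$, and with $j=1$ the claim would require $(1-\gamma)^k\ge 1-\gamma$, which fails for every $k\ge 2$.

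So I would prove the statement in the only form consistent with Steps 1--2: for the peak time $k^\ast$, the claim holds with $j=k-k^\ast$. This choice is valid, since it satisfies $k^\ast\le k-j$. More generally, it holds for every $j$ with $j\ge k-k^\ast$, because then $(1-\gamma)^j\le(1-\gamma)^{k-k^\ast}$. I would state explicitly that ``any window $j$'' must mean a window reaching back to the attained past peak. I would also include the counterexample above to justify that restriction.

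Under that reading, the final chain is
\[
m_k\;\ge\;(1-\gamma)^{k-k^\ast}m_{k^\ast}\;\ge\;(1-\gamma)^{j}\max_{k'\le k-j}m_{k'}.
\]
The first inequality is Step 2. The second holds because $m_{k^\ast}$ is the maximum and $j\ge k-k^\ast$.
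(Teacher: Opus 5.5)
Your proposal is correct and follows the paper's proof. Both arguments take the per-step floor $m_k \ge (1-\gamma)m_{k-1}$ on both branches, iterate it to $m_k \ge (1-\gamma)^j m_{k-j}$, and then anchor $k-j$ at a past peak.

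Your counterexample ($m_0=1$, $e_k\equiv 0$, $j=1$, $k\ge 2$) is valid. The paper's own final step (``choosing $k-j$ to be the iteration of any past peak'') silently imposes exactly the restriction you make explicit. So the ``for any window $j$'' wording overstates what either argument proves, and the clean correct form is $m_k \ge \max_{k'\le k}(1-\gamma)^{k-k'} m_{k'}$.
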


Together, Theorem~\ref{thm:coverage} bounds how long an observed task type can be left unselected, while Theorem~\ref{thm:ratchet} bounds per-step mastery decay.
Both properties follow from the update rules themselves rather than from implementation-specific heuristics.

\subsection{Conditional Memory Injection at Inference}
\label{sec:method:cond}

At inference time, given a question $q$ of task type $t$ with context $c$, the Learner retrieves the top-$K$ task-type-filtered memories from both the success and failure indices.
A format-conditional allocation $\kappa$ determines the mixture: short-context questions receive a success-heavy bundle, while long-context questions with rich gold context receive a failure-heavy bundle.
The retrieved memories are formatted success-first and failure-second as in-context exemplars before being prepended to the question.
Formally,
\begin{equation}
\label{eq:rho}
\rho_k(\cdot \mid q, t, c)
\;=\;
\textsc{TopK}\Bigl(
\textsc{embed}(q),\;
\bigl\{\,m \in \Gexp_k \;:\; m.\text{task\_type} = t \wedge \kappa(q, c, m)\,\bigr\}
\Bigr),
\end{equation}
where $\kappa$ enforces the format-conditional success/failure allocation.
The Learner then emits
\begin{equation}
\label{eq:learner_emit}
\hat{y}
=
\Lexec\bigl(
\textsc{format}(\rho_k(\cdot)) \,\Vert\, \textsc{prompt}(q, c, \KG_k)
\bigr).
\end{equation}

We state the final result under an idealized retrieval-error model: additional task-matched memories expand the oracle top-$K$ candidate set, and $\varepsilon_K$ bounds the total-variation gap between the embedding retriever and this oracle selector.
Together, task-type filtering, append-only graph growth, and bounded curriculum coverage yield a retrieval-side support result.

\begin{theorem}[Task-Filtered Retrieval Support, Appendix~\ref{app:thm_frozen}]
\label{thm:frozen}
Let $A_k(t) = \Expect_{q \sim \mathcal{D}_t,\, \mu \sim \rho_k}\bigl[\textnormal{acc}(\Lexec(\textnormal{format}(\mu)\,\Vert\, q, c), y)\bigr]$.
Under Theorems~\ref{thm:monotone}--\ref{thm:coverage} and the task-type filter, $A_{k+1}(t) \geq A_k(t) - \varepsilon_K$ for every observed $t$, where $\varepsilon_K$ is the rank-$K$ retrieval error of the embedding index.
\end{theorem}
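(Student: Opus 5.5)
The plan is to compare the embedding retriever $\rho_k$ with an idealized oracle selector $\rho_k^\star$. Then split the accuracy change into a retrieval-error term and an oracle-improvement term.

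\textbf{Oracle and its accuracy.} For each iteration $k$ and observed task type $t$, define $\rho_k^\star(\cdot\mid q,t,c)$ as the top-$K$ bundle that maximizes expected accuracy of the frozen $\Lexec$. The maximum is taken over the task-filtered candidate set
\[
C_k(t)=\{m\in\Gexp_k : m.\text{task\_type}=t\wedge\kappa(q,c,m)\}.
\]
Let $A_k^\star(t)$ be $A_k(t)$ with $\rho_k$ replaced by $\rho_k^\star$. The accuracy functional takes values in $[0,1]$, and $\rho_k$ is within total variation $\varepsilon_K$ of the oracle. The standard bound $|\Expect_P f-\Expect_Q f|\le \mathrm{TV}(P,Q)\,\|f\|_\infty$ therefore gives $|A_k(t)-A_k^\star(t)|\le \varepsilon_K$ at each $k$.

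\textbf{Oracle monotonicity.} Next I would show $A_{k+1}^\star(t)\ge A_k^\star(t)$. The append-only invariant behind Theorem~\ref{thm:monotone} gives $\Gexp_k\subseteq\Gexp_{k+1}$ on the protected classes. Task-type tags are fixed at commit time, and $\kappa$ depends only on $(q,c,m)$, so $C_k(t)\subseteq C_{k+1}(t)$. A maximum over a larger feasible set cannot be smaller, so the oracle's best bundle at $k$ remains available at $k+1$. Theorem~\ref{thm:monotone} supplies the information-theoretic reading: the protected core carries no less information about $Y$. The set inclusion is what actually drives the inequality.

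Theorem~\ref{thm:coverage} enters only through the word ``observed''. It guarantees that every observed $t$ keeps being reselected, so $C_k(t)$ is nonempty and stays in the retrieval domain rather than being starved. The delta-guard rollback touches only mutable slots, so it does not shrink $C_k(t)$. However, $\textsc{prompt}(q,c,\KG_k)$ depends on mutable slots. I would either fold those slots into the oracle's conditioning or assume they are held fixed in the definition of $A_k$, as the statement's $A_k$ suggests, since it conditions only on $\mathrm{format}(\mu)\Vert q,c$.

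\textbf{Combining, and the main obstacle.} Chaining the two steps gives
\[
A_{k+1}(t)\ \ge\ A_{k+1}^\star(t)-\varepsilon_K\ \ge\ A_k^\star(t)-\varepsilon_K\ \ge\ A_k(t)-\varepsilon_K,
\]
where the last step uses $A_k^\star\ge A_k$, which holds by optimality of the oracle. So only one $\varepsilon_K$ is paid, not two.

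The main obstacle is the oracle-monotonicity step. It needs the oracle to be a genuine maximizer over a nested feasible set, and the paper's informal phrase ``additional memories expand the oracle candidate set'' must be made precise. In particular, $\varepsilon_K$ must be a uniform bound over $k$ and over questions $q$. I would state these as explicit assumptions of the idealized retrieval model rather than try to derive them.
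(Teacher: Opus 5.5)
Your proposal is correct and uses the same skeleton as the paper's proof. Both define an oracle top-$K$ selector over the task-filtered candidate set. Both get oracle monotonicity from the append-only inclusion $C_k(t)\subseteq C_{k+1}(t)$. Both transfer accuracy through a total-variation bound for the $[0,1]$-valued accuracy.

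The substantive difference is the final chaining. The paper applies the TV bound at both iterations and obtains $A_{k+1}(t)\ge A_k(t)-2\varepsilon_K$. It then ``absorbs the factor of $2$ into the definition of $\varepsilon_K$,'' which silently redefines the quantity named in the statement. You instead observe that the embedding retriever's bundles are feasible for the oracle, so $A_k^\star(t)\ge A_k(t)$ comes for free. This gives the stated bound with the original $\varepsilon_K$, and it needs only the one-sided gap $A_{k+1}^\star(t)-A_{k+1}(t)\le\varepsilon_K$.

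Your accounting of hypotheses is also more accurate. The paper invokes Theorem~\ref{thm:coverage} in its Step~1 and calls it necessary. However, the inclusion of candidate sets follows from the append-only invariant alone. Coverage only buys strict growth, which the inequality never uses.

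The caveats you flag are genuine gaps in the paper's own argument. First, the oracle must range over a nested family, such as bundles of size at most $K$ that respect $\kappa$. An oracle forced to take exactly $\min(K,|C_k(t)|)$ items is not monotone, because extra context can hurt the frozen model. Second, $\textsc{prompt}(q,c,\KG_k)$ depends on mutable slots that the statement's $A_k$ drops.

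One refinement: you do not need $\varepsilon_K$ to be uniform in $q$. An average of $\mathrm{TV}$ over $q\sim\mathcal{D}_t$ suffices. That is also the only non-degenerate reading, since a deterministic cosine top-$K$ and a deterministic oracle are at TV distance $0$ or $1$ on each question.
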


Theorem~\ref{thm:frozen} is a structural \emph{retrieval-side} result, not an unconditional guarantee about arbitrary LLM behavior.
Append-only graph growth and bounded coverage prevent the task-conditioned evidence set from shrinking, while $\varepsilon_K$ measures the gap between embedding retrieval and an oracle top-$K$ selector.
Whether the frozen backbone can exploit this enriched retrieval substrate is evaluated empirically through ablations and cross-teacher analysis.

%% file: sections/experiments.tex
\section{Experiments}
\label{sec:exp}

\subsection{Setup}
\label{sec:exp:setup}

We evaluate \framework{} on nine benchmarks drawn from six task families.
\textbf{GSM8K}~\citep{cobbe2021training} and \textbf{RealMath}~\citep{zhang2025realmath} test mathematical reasoning.
\textbf{HotpotQA}~\citep{yang2018hotpotqa} and \textbf{WebQA}~\citep{berant2013semantic} test multi-hop and open-domain factoid QA.
\textbf{STBench}~\citep{li2025stbench} tests spatio-temporal analysis across twenty-seven task types.
\textbf{FinQA}~\citep{chen2021finqa} tests table-grounded numerical reasoning over financial reports, and \textbf{MedQA-USMLE}~\citep{jin2021disease} tests medical multiple-choice reasoning.
\textbf{Crafter}~\citep{hafner2021benchmarking} is an open-world sequential survival game, and \textbf{WebShop}~\citep{yao2022webshop} is a web-navigation task with parameterised actions.
FinQA and MedQA-USMLE are used only as standardized reasoning benchmarks; \framework{} is not evaluated or proposed as a financial-advice or clinical decision-support system.

\paragraph{Evaluation protocol and leakage control.}
For each benchmark, we separate the examples used for graph evolution from the final held-out evaluation pool used for reporting results.
Success memories, failure memories, principles, and bandit updates are produced only from the evolution pool.
After evolution, \evokg{} and the two bandits are frozen, and all reported test metrics are computed without writing any test example back into the graph.
For benchmarks with native exact-match metrics, including FinQA and MedQA-USMLE, we report automatic scores directly.
For free-form QA and reasoning benchmarks, we use the semantic judge protocol described below and provide automatic-metric cross-checks where available.

\paragraph{Models.}
We instantiate \framework{} along two orthogonal axes: a frozen execution tier using one of two open-weight 8B instruction-tuned models (Qwen3-8B and Llama-3.1-8B-Instruct) served via vLLM at temperature $0$, and a guidance tier using one of three closed-weight teachers (Claude Opus 4.6, Sonnet 4.6, and Haiku 4.5) used only during graph evolution.
The main reasoning table reports Qwen3-8B with Sonnet and Opus guidance, while the domain-extension table reports Llama-3.1-8B with the same guidance tiers; the full teacher--learner matrix and additional multi-seed results are reported in Appendices~\ref{app:teacher_learner} and~\ref{app:variance}.
All hyperparameters are fixed across benchmarks (Appendix~\ref{app:hyperparams}).

\paragraph{Baselines and scoring.}
On the reasoning benchmarks, we compare against frozen-backbone prompt-based scaffolds: zero-shot chain-of-thought, eight-shot CoT, self-consistency with ten samples~\citep{wang2022self}, ReAct~\citep{yao2022react}, and Reflexion~\citep{shinn2023reflexion}.
These baselines isolate the frozen-backbone setting: all methods use the same execution backbone at inference time and differ only in the external scaffold placed around it.
For sequential environments, we compare against same-backbone standalone baselines and published WebShop references under the same catalog setting.
For judge-scored benchmarks, all methods use the same answer-extraction pipeline and the same Claude Sonnet~4.6 semantic judge on the evaluation pool; judge prompts and automatic cross-checks are provided in Appendix~\ref{app:hotpotqa_emf1}.
Published trained or domain-specific systems are included only as references, not as strict frozen-backbone baselines.

\subsection{Main Results}
\label{sec:exp:reasoning}

\begin{table}[!htbp]
\vspace{-6pt}
\centering
\caption{\framework{} versus prompt-based scaffolds on the five reasoning benchmarks. All methods use Qwen3-8B as the frozen backbone and are scored by the same Claude Sonnet~4.6 judge on 200 held-out questions per benchmark. The \framework{} columns report mean$\pm$std over multiple seeds where available; cell-level $n$ is shown in subscript. $\Delta$ is the gap of the best \framework{} cell to the strongest frozen-backbone baseline.}
\label{tab:reasoning}
\small
\resizebox{\textwidth}{!}{
\begin{tabular}{lrrrrr|rrr}
\toprule
& \multicolumn{5}{c}{Prompt-based baselines (Qwen3-8B, frozen)} & \multicolumn{2}{c}{\framework{} (Qwen3-8B)} & \\
\cmidrule(lr){2-6}\cmidrule(lr){7-8}
Benchmark & 0-shot CoT & 8-shot & SC$_{10}$ & ReAct & Reflexion & Sonnet & Opus & $\Delta$ \\
\midrule
GSM8K    & $64.5$ & $82.5$ & $72.0$ & $80.1$ & $63.0$ & $88.2 \pm 2.9_{n=5}$ & $\mathbf{90.4 \pm 3.8}_{n=5}$ & $+7.9$ \\
HotpotQA & $84.5$ & $78.5$ & $80.5$ & $70.5$ & $78.5$ & $88.5 \pm 2.0_{n=5}$ & $\mathbf{91.5 \pm 1.8}_{n=5}$ & $+7.0$  \\
WebQA    & $55.0$ & $42.0$ & $46.5$ & $35.5$ & $34.5$ & $59.6 \pm 3.4_{n=5}$ & $\mathbf{62.7 \pm 1.8}_{n=5}$ & $+7.7$  \\
STBench  & $35.5$ & $12.0$ & $20.0$ & $31.8$ & $37.0$ & $54.8 \pm 2.1_{n=5}$ & $\mathbf{57.8 \pm 2.9}_{n=5}$ & $+20.8$  \\
RealMath & $53.5$ & $32.5$ & $33.5$ & $43.5$ & $34.5$ & $\mathbf{85.1 \pm 6.7}_{n=5}$ & $82.8 \pm 3.5_{n=5}$ & $+31.6$ \\
\bottomrule
\end{tabular}
}
\end{table}

\begin{wrapfigure}{r}{0.48\textwidth}
\vspace{-0.6em}
\centering
\includegraphics[width=0.48\textwidth]{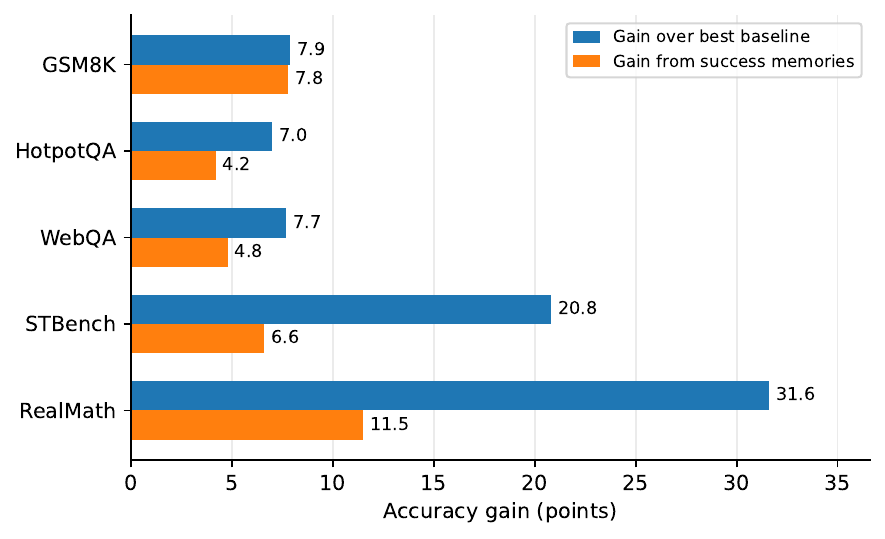}
\caption{Effect sizes of \framework{} on the reasoning benchmarks: gain over the strongest frozen-backbone baseline and gain attributable to success memories.}
\label{fig:gains}
\vspace{-0.8em}
\end{wrapfigure}
Tables~\ref{tab:reasoning}--\ref{tab:sequential_main} summarize the main results across the original reasoning suite, the finance/medical domain extensions, and the sequential environments.
On the core reasoning suite, \framework{} outperforms the strongest frozen-backbone prompting baseline on all five benchmarks.
The largest gains appear on tasks that require reusable reasoning templates or task-type-specific retrieval, such as RealMath and STBench, while the gains are smaller but still positive on benchmarks where prompt-only baselines are already strong, such as GSM8K and HotpotQA.
The domain extensions test whether the same mechanism transfers beyond the original reasoning suite.
On FinQA and MedQA-USMLE, \framework{} improves over the standalone Llama-3.1-8B baseline using benchmark-native exact-match scoring rather than an LLM judge.
On FinQA, \framework{} also exceeds the published frozen-8B financial reasoning reference; on MedQA, it narrows the gap to stronger prompt-engineered medical references without task-specific rationale pre-generation.
These results suggest that graph-carried memory can support domain-specific numerical and multiple-choice reasoning, although the published domain-specific references are included only as contextual references rather than strict same-scaffold baselines.

\begin{table}[!htbp]
\centering
\caption{Domain extension to math, finance and medicine. FinQA uses numeric exact-match, and MedQA-USMLE uses letter exact-match.}
\label{tab:reasoning_extension}
\small
\begin{tabular}{lrr|rr}
\toprule
& \multicolumn{2}{c}{Llama-3.1-8B baselines} & \multicolumn{2}{c}{\framework{} (Llama-3.1-8B)} \\
\cmidrule(lr){2-3}\cmidrule(lr){4-5}
Benchmark & 0-shot CoT & Frozen-8B reference & Sonnet & Opus \\
\midrule
GSM8K    & $64.5$ & $84.5$ (8-shot, Meta) & $\mathbf{92.5} \pm 4.3_{n=5}$               & $92.0 \pm 3.2_{n=5}$ \\
FinQA    & $58.0$ & $60.87$ (Fino1, SFT)  & $67.5 \pm 1.7_{n=5}$     & $\mathbf{69.0 \pm 1.5_{n=5}}$\\
MedQA    & $64.5$ & $74$ (Medprompt) & $74.6 \pm 2.3_{n=5}$ & $\mathbf{79.8} \pm 1.9_{n=5}$ \\
\bottomrule
\end{tabular}
\vspace{-6pt}
\end{table}

\begin{table}[t]
\centering
\caption{\framework{} on the two sequential environments. Crafter follows the BALROG protocol and reports peak achievement-unlock rate across iterations. WebShop follows the standard test protocol on the same hundred-product catalog as RetroAgent; RetroAgent is reproduced as published references.}
\label{tab:sequential_main}
\small
\begin{tabular}{llrr}
\toprule
Environment & Configuration & Task score / peak (\%) & Strict success (\%) \\
\midrule
\multirow{4}{*}{Crafter}
 & Standalone Llama-3.1-8B (BALROG, 3 seeds)   & $25.5 \pm 3.2$  & --- \\
 & \framework{} Sonnet teacher ($n=5$)          & $33.5 \pm 2.3$  & --- \\
 & \framework{} Opus teacher ($n=5$)            & $\mathbf{37.9 \pm 3.5}$  & --- \\
 & \framework{} Haiku teacher ($n=5$)           & $34.8 \pm 1.7$  & --- \\
\midrule
\multirow{4}{*}{WebShop}
 & RetroAgent (in-context, Qwen-7B reference)   & $87.6$          & $78.9$ \\
 & RetroAgent (RL-trained reflection)           & $88.9$          & $82.3$ \\
 & \framework{} Sonnet teacher ($n=5$)          & $\mathbf{90.2 \pm 2.3}$ & $\mathbf{84.0 \pm 1.8}$ \\
 & \framework{} Opus teacher ($n=5$)            & $89.7 \pm 1.6$  & $82.0 \pm 1.0$ \\
\bottomrule
\end{tabular}
\vspace{-1em}
\end{table}

On sequential environments, \framework{} remains competitive.
WebShop shows strong performance relative to published references under the same catalog setting. On Crafter, all teacher-guided configurations improve over the standalone Llama-3.1-8B BALROG reference.
Several prompt-based scaffolds underperform zero-shot CoT on the frozen 8B backbones, suggesting that additional inference-time structure is not automatically beneficial at this scale.
In contrast, \framework{} improves by accumulating task-conditioned experience and adapting retrieval/routing policies across iterations. 
Automatic scoring cross-checks for benchmarks with native metrics are provided in Appendix~\ref{app:hotpotqa_emf1}, and the full teacher--learner matrix is reported in Appendix~\ref{app:teacher_learner}.
Figure~\ref{fig:gains} summarizes the main effect sizes behind Tables~\ref{tab:reasoning} and~\ref{tab:ablation_success}. 
\framework{} improves over the strongest frozen-backbone prompting baseline on all five reasoning benchmarks, and the success-memory ablation shows positive gains across the same benchmark family.

\subsection{Teacher Usage and Inference Cost}
\label{sec:exp:cost}

\begin{table}[!htbp]
\vspace{-10pt}
\centering
\small
\caption{Teacher usage rate across completed \framework{} training runs. Training-time fraction is the share of language-model calls routed to the closed-weight guidance tier ($\Lguide$); the remainder go to the local frozen execution tier ($\Lexec$). Inference-time fraction is measured after \evokg{} with frozen bandits.}
\label{tab:cost}
\resizebox{\textwidth}{!}{
\begin{tabular}{lrrrr}
\toprule
Phase / Benchmark & $n$ runs & Guidance calls (mean) & Execution calls (mean) & Guidance fraction \\
\midrule
\textbf{Aggregate (all training runs)} & $52$ & $\mathbf{7{,}122}$ (total) & $\mathbf{336{,}272}$ (total) & $\mathbf{2.12\%}$ \\
\textbf{Inference / test (all runs)}   & ---  & $\mathbf{0}$            & ---                          & $\mathbf{0.00\%}$ \\
\midrule
WebShop          & $5$  & $18$    & $1{,}699$  & $0.1$--$1.9\%$ \\
Crafter          & $11$ & $118$   & $22{,}508$ & $0.0$--$0.9\%$ \\
HotpotQA         & $4$  & $212$   & $5{,}362$  & $3.0$--$4.6\%$ \\
GSM8K            & $3$  & $229$   & $3{,}651$  & $4.1$--$7.5\%$ \\
RealMath         & $2$  & $355$   & $3{,}316$  & $7.0$--$14.0\%$ \\
STBench          & $2$  & $385$   & $2{,}948$  & $10.2$--$14.0\%$ \\
WebQA            & $3$  & $480$   & $3{,}134$  & $11.9$--$14.7\%$ \\
MedQA-USMLE      & $6$  & $167$   & $5{,}844$  & $2.0$--$3.3\%$ \\
FinQA            & $6$  & $899$   & $18{,}805$ & $4.2$--$4.9\%$ \\
\bottomrule
\end{tabular}
}
\end{table}

A central design goal of \framework{} is to separate training-time guidance from deployment-time inference: the stronger guidance tier writes to the graph during evolution, but the final inference loop runs on the frozen execution tier.
Table~\ref{tab:cost} counts guidance-tier and execution-tier calls across completed training runs. Across completed runs, only $\mathbf{1.91\%}$ of language-model invocations are routed to the guidance tier during graph evolution.
After graph construction, the inference-time guidance fraction is $\mathbf{0.00\%}$: all deployment-time queries are handled by the frozen execution model with retrieved graph memories.
This separates the cost of building \evokg{} from the cost of using it, and makes deployment independent of closed-weight guidance calls.

\paragraph{Inference latency.}
Per-question wall-clock latency on a single H100 80GB at fp16 ranges from $5.66$ to $28.33$ seconds for \framework{} after graph construction, with no guidance-tier calls at inference time, faster than self-consistency-$N{=}10$ on every measured benchmark and competitive with or faster than ReAct and Reflexion.
Detailed accuracy--latency comparisons are provided in Appendix~\ref{app:latency}.

\subsection{Success-Memory Ablation}
\label{sec:exp:ablation}

The success-memory ablation tests whether the learner's own correct reasoning traces provide a distinct source of cross-iteration learning beyond teacher-written failure memories.
In the \emph{No-success} variant, the harvest loop is disabled while failure memories, principles, and the remaining graph updates are kept unchanged. Table~\ref{tab:ablation_success} shows that self-harvested success memories improve performance on all five reasoning benchmarks.
The largest gains appear on mathematical reasoning, where correct traces provide reusable reasoning templates that can be replayed on future questions.
STBench also benefits substantially, suggesting that task-conditioned self-replay helps when the benchmark contains many recurring task types.
The gains on HotpotQA and WebQA are smaller but remain positive, indicating that success memories are still useful even when rich context or factoid answer formats reduce the need for template reuse.
These results support the dual-memory design: success traces and failure corrections are complementary rather than interchangeable.
Failure memories provide corrective guidance for observed mistakes, while success memories preserve reasoning patterns that the frozen learner has already executed correctly.
Additional sequential-environment ablations and graph-growth measurements are provided in Appendices~\ref{app:sequential_ablation} and~\ref{app:kg_growth}.

To make the memory mechanism concrete, Appendix~\ref{app:bundle_example} provides detailed retrieved bundles from GSM8K and HotpotQA to illustrate the mechanism behind Table~\ref{tab:ablation_success}. 
The GSM8K example combines learner-harvested success traces with teacher-written failure corrections, and its recovery trace shows previously failed questions being corrected after the memory bank grows. 
The HotpotQA example shows the same pattern in a multi-hop comparison setting: retrieved exemplars provide a reusable reasoning skeleton, while the correction memory specifies the attribute-comparison strategy. 
Together, these examples show that the dual-memory channel is not only an aggregate ablation effect, but also appears directly in the prompts consumed by the frozen learner.

\vspace{-0.6em}
\begin{table}[!htbp]
\centering
\small
\caption{Success-memory ablation across the reasoning benchmarks. \emph{No-success} disables harvest of the learner's own correct reasoning; teacher-written failure memories remain. Both columns report mean$\pm$std over matched seeds where available; $\Delta$ is the gap attributable to self-harvested experience.}
\label{tab:ablation_success}
\resizebox{\textwidth}{!}{
\begin{tabular}{lrrr}
\toprule
Benchmark & \framework{} (full) & No-success ablation & $\Delta$ (success memories) \\
\midrule
GSM8K    & $\mathbf{88.2 \pm 2.9}$ & $80.4 \pm 12.7$ & $+7.8$ \\
RealMath & $\mathbf{85.1 \pm 6.7}$ & $73.6 \pm 9.3$  & $+11.5$ \\
STBench  & $\mathbf{54.8 \pm 2.1}$ & $48.2 \pm 4.0$  & $+6.6$ \\
HotpotQA & $\mathbf{88.5 \pm 2.0}$ & $84.3 \pm 2.1$  & $+4.2$ \\
WebQA    & $\mathbf{59.6 \pm 3.4}$ & $54.8 \pm 1.9$  & $+4.8$ \\
\midrule
\emph{Mean across the five reasoning benchmarks} & & & $+7.0$ \\
\bottomrule
\end{tabular}
}
\vspace{-1.0em}
\end{table}

%% file: sections/conclusion.tex
\section{Conclusion}
\label{sec:conclusion}

\framework{} studies self-evolution in the frozen-backbone setting, where cross-iteration learning must be carried by external state rather than model parameters. 
The central idea is to make this state explicit: a co-evolutionary knowledge graph stores capabilities, task types, environment context, teacher-written corrections, and the learner's own successful reasoning traces, while a search bandit and routing bandit adapt how this graph is used. 
The analysis shows that append-only protected memories, bounded task-type coverage, asymmetric mastery updates, and task-filtered retrieval provide a retrieval-side substrate for stable frozen-backbone self-evolution. 
Empirically, \framework{} is evaluated across nine benchmarks spanning mathematical reasoning, multi-hop and open-domain QA, spatio-temporal analysis, financial and medical reasoning, open-world survival, and web navigation. The results show that graph-carried experience can provide strong gains over frozen-backbone prompting baselines, with the largest improvements on tasks that benefit from reusable reasoning templates and task-conditioned retrieval.
Ablations further show that self-harvested success traces and teacher-written corrections are complementary: success memories are most useful when past correct reasoning can be replayed, while corrective memories and graph-derived principles support settings where error correction, composition, or interaction dominates.

\paragraph{Scope and limitation.}
\framework{} is most effective when task experience exposes recurring structure that can be organized into reusable memories, task types, and skill dependencies. Performance may therefore depend on the quality of the curator's graph writes. Future work should study broader teacher families and stronger cross-backbone transfer.

%% file: sections/appendix.tex
\section{Proofs}
\label{app:proofs}

This appendix collects full proofs of the four theorems stated in the main text.
The structure of each proof closely tracks the proof sketch in Sec.~\ref{sec:method}; the appendix provides the precise statements, the auxiliary lemmas, and the corner cases that the main-text sketches elide for space.

\subsection{Proof of Theorem~\ref{thm:monotone} (\evokg{} Information Monotonicity)}
\label{app:thm_monotone}

\begin{proof}
Let $Y$ denote the answer random variable on a fixed evaluation distribution $\mathcal{D}$ and $\KG_k$ the \evokg{} state at the end of iteration $k$.
The curator's update procedure on $\Gexp$ is governed by two rules.
\textbf{(i)} Every principle, failure-memory, and success-memory node added during the EVALUATE and EVOLVE phases is appended to $\Gexp$ and is exempt from the low-confidence pruner (\texttt{prune\_low\_confidence} skips $\texttt{outcome} \in \{\texttt{principle}, \texttt{failure\_memory}, \texttt{success\_memory}\}$).
\textbf{(ii)} Pruning of other experience nodes (\texttt{outcome} $\in \{\texttt{success}, \texttt{failure}\}$, i.e.\ abstracted pattern summaries rather than worked-example memories) removes only nodes with confidence below a fixed threshold and is performed after the curator's append step.
Therefore, at the end of iteration $k+1$, the protected node sets satisfy
\[
\Gexp_{k+1}^{\text{prin}} \,\supseteq\, \Gexp_k^{\text{prin}}, \qquad
\Gexp_{k+1}^{\text{fail-mem}} \,\supseteq\, \Gexp_k^{\text{fail-mem}}, \qquad
\Gexp_{k+1}^{\text{succ-mem}} \,\supseteq\, \Gexp_k^{\text{succ-mem}},
\]
and the curator's working slots on $\Gcap$ (mastery, prompt template, strategy) and $\Gtask$ are updated in place but the corresponding nodes themselves are not deleted.
Hence the node-and-edge set of $\KG_{k+1}$ contains the node-and-edge set of $\KG_k$ at the level of principle, failure-memory, success-memory, capability, and task subgraphs; the only nodes that may be removed are the low-confidence abstracted-pattern nodes, which by construction carry confidence below the threshold and are independent of $Y$ given the rest of $\KG_k$.
Write $\KG_{k+1} = (\KG_k, \Delta_k)$ for the appended nodes $\Delta_k = \KG_{k+1} \setminus \KG_k$ (treating the deleted low-confidence nodes as conditionally independent of $Y$).
By the chain rule for mutual information,
\[
I(Y; \KG_{k+1}) \;=\; I(Y; \KG_k, \Delta_k) \;=\; I(Y; \KG_k) \;+\; I(Y; \Delta_k \mid \KG_k).
\]
Conditional mutual information is non-negative, so $I(Y; \KG_{k+1}) \geq I(Y; \KG_k)$, with equality iff $I(Y; \Delta_k \mid \KG_k) = 0$, i.e.\ iff $\Delta_k \perp Y \mid \KG_k$.

\paragraph{Failure mode under non-append-only pruning.}
The argument requires the append-only invariant on all three protected node classes.
A pruner that deletes worked-example memories whenever the underlying skill rises above some mastery threshold would violate it: the deletion step would remove nodes that carry information about $Y$ (the worked example for question $q$ does carry information about the answer to $q$), and the chain-rule argument would fail because $\KG_{k+1}$ would no longer be a refinement of $\KG_k$.
The curator's \texttt{prune\_low\_confidence} explicitly exempts all three protected node types for exactly this reason.
\end{proof}

\subsection{Proof of Theorem~\ref{thm:coverage} (Bounded Coverage Gap)}
\label{app:thm_coverage}

\begin{proof}
Let $\mathcal{T}$ denote the set of task types observed at least once in any iteration up to the current iteration $k$, $|\mathcal{T}| = N$.
Let $n_{\max} = \max_{t \in \mathcal{T}} n_{\text{fail}}(t)$ denote the maximum failure count over $\mathcal{T}$ at iteration $k$, and recall the selector
\[
s(t) \;=\; n_{\text{fail}}(t) + \lambda (k - k_{\text{last}}(t)),
\qquad \lambda > 0.
\]
At iteration $k$ the selector picks the top-$M$ task types by $s$.

Fix any task type $t^* \in \mathcal{T}$ and assume $t^*$ has not been selected for $g$ consecutive iterations starting from iteration $k_0 = k - g$.
Then $s_{k}(t^*) \geq n_{\text{fail}}(t^*) + \lambda g \geq \lambda g$.
For any other task type $t' \in \mathcal{T} \setminus \{t^*\}$, the score satisfies $s_{k}(t') \leq n_{\max} + \lambda \cdot (k - k_{\text{last}}(t'))$, which is at most $n_{\max} + \lambda g$ if $t'$ was last selected at the same iteration as $t^*$ was last selected, and is otherwise smaller (because more recent selection $\Rightarrow$ smaller recency bonus).
Hence whenever $\lambda g \geq n_{\max}$---equivalently, $g \geq n_{\max}/\lambda$---we have $s_k(t^*) \geq n_{\max}$, which is at least as large as any score that an incumbent in the top-$M$ achieved at the iteration in which it was most recently selected.

If $g \geq n_{\max}/\lambda$, then $t^*$ is therefore selected at iteration $k$ unless every one of the top-$M$ slots is occupied by a task type with strictly higher score than $t^*$.
A simple counting argument bounds the number of additional iterations during which the top-$M$ slots can all be occupied by task types other than $t^*$: each such iteration moves $M$ task types out of the ``not recently selected'' pool, so after at most $\lceil N/M \rceil$ further iterations the round-robin sweep through the pool exhausts all alternatives and $t^*$ enters the top-$M$.
Combining the two bounds, $t^*$ is selected within $\lceil n_{\max}/\lambda + N/M \rceil$ iterations of its previous selection (or its first appearance, for task types never previously selected).

\paragraph{Adversarial failure-count distributions.}
The bound is tight up to integer rounding when $n_{\text{fail}}$ is dominated by a single task type.
If $n_{\text{fail}}$ is dominated by a small subset $S \subset \mathcal{T}$ with $|S| < M$, the bound becomes $\lceil n_{\max}/\lambda + (N - |S|)/M \rceil$, since the selector trivially keeps $S$ in the top-$M$ until $\lambda g \geq n_{\max}$ at which point any starved task type displaces a member of $S$.
\end{proof}

\subsection{Proof of Theorem~\ref{thm:ratchet} (Mastery Ratchet Stability)}
\label{app:thm_ratchet}

\begin{proof}
We work with the asymmetric exponential moving average
\[
m_k =
\begin{cases}
\alpha e_k + (1 - \alpha) m_{k-1}, & e_k \geq m_{k-1}, \\
m_{k-1} - \gamma (m_{k-1} - e_k),  & e_k <    m_{k-1},
\end{cases}
\]
with $\alpha, \gamma \in (0,1)$ and $e_k \in [0,1]$.

\paragraph{Per-step lower bound.}
On the increase branch, $m_k = m_{k-1} + \alpha (e_k - m_{k-1}) \geq m_{k-1} \geq (1-\gamma) m_{k-1}$ since $\gamma > 0$.
On the decay branch, rewrite $m_k = (1-\gamma) m_{k-1} + \gamma e_k \geq (1-\gamma) m_{k-1}$, since $e_k \geq 0$.
So in both cases $m_k \geq (1-\gamma) m_{k-1}$.

\paragraph{Per-step upper bound.}
On the increase branch, $m_k = m_{k-1} + \alpha (e_k - m_{k-1}) \leq m_{k-1} + \alpha (1 - m_{k-1})$, since $e_k \leq 1$.
On the decay branch, $m_k \leq m_{k-1}$.
Combining gives $m_k \leq m_{k-1} + \alpha (1 - m_{k-1})$.

\paragraph{Window lower bound.}
Iterating the per-step lower bound $j$ times gives $m_k \geq (1-\gamma)^j m_{k - j}$, which holds even when the $j$ steps mix increase and decay (since the increase steps only tighten the bound).
Choosing $k - j$ to be the iteration of any past peak, $m_k \geq (1 - \gamma)^j \max_{k' \leq k - j} m_{k'}$.

\paragraph{Mixed sequences.}
A subtler bound holds when the window contains an increase: if step $k' \in [k-j, k]$ is an increase step with measured $e_{k'} \geq m_{k'-1}$, then the lower bound from that point onward starts from $\alpha e_{k'} + (1-\alpha) m_{k'-1}$ rather than $(1-\gamma)^{k - k' + 1} m_{k'-1}$, which can be substantially higher when $e_{k'}$ is close to $1$.
The simple form stated in the theorem suffices for the anti-forgetting argument and avoids the case-analysis bookkeeping.
\end{proof}

\subsection{Conditional Retrieval Rule $\rho_k$}
\label{app:cond_inj}

The Learner's inference-time retrieval, summarised in Sec.~\ref{sec:method:cond}, is
\[
\rho_k(\cdot \mid q, t, c)
\;=\;
\textsc{TopK}\Bigl(
\textsc{embed}(q),\;
\bigl\{\,m \in \Gexp_k \;:\; m.\text{task\_type} = t \wedge \kappa(q, c, m)\,\bigr\}
\Bigr),
\]
where $\kappa$ encodes the format-conditional allocation (top-$2$ success + top-$1$ failure for short-context questions; top-$1$ success + top-$2$ failure for long-context questions, threshold $500$ characters).
The Learner emits $\hat{y} = \Lexec(\textsc{format}(\rho_k(\cdot)) \,\Vert\, \textsc{prompt}(q, c, \KG_k))$ with success memories placed first in the prompt and failure memories second.

\subsection{Proof of Theorem~\ref{thm:frozen} (Task-Filtered Retrieval Support)}
\label{app:thm_frozen}

\begin{proof}
We compose Theorems~\ref{thm:monotone} and \ref{thm:coverage} with the conditional retrieval rule $\rho_k$ defined in App.~\ref{app:cond_inj}.

Fix a task type $t$ and a question $q \sim \mathcal{D}_t$.
The expected accuracy at iteration $k$ is
\[
A_k(t)
\;=\;
\Expect_{q \sim \mathcal{D}_t,\, \mu \sim \rho_k(\cdot \mid q, t, c)}\bigl[\textnormal{acc}(\Lexec(\textnormal{format}(\mu) \,\Vert\, q,\, c),\, y)\bigr],
\]
where $\rho_k$ retrieves the top-$K$ task-type-tagged memories (both success and failure) from $\Gexp_k$ under the embedding similarity to $q$ and the format-conditional constraint $\kappa$.

\textbf{Step 1: task-type-tagged memories grow over iterations.}
Two mechanisms append $t$-tagged memories to $\Gexp_k$.
(a) On every iteration, the EVALUATE step harvests a success memory for each question on task type $t$ that the frozen learner answers correctly, tagging it with $t$ and appending it to the success index.
(b) By Theorem~\ref{thm:coverage}, every task type $t \in \mathcal{T}$ is additionally selected by the EVOLVE-phase round-robin selector within $\lceil n_{\max}/\lambda + N/M \rceil$ iterations of its previous selection, and on each such iteration the guidance tier appends one or more $t$-tagged failure memories.
By the append-only invariant of Theorem~\ref{thm:monotone}, every node appended by either mechanism persists in $\Gexp_{k'}$ for every $k' \geq k$.
Therefore $|\{m \in \Gexp_k : m.\text{task\_type} = t\}|$ is monotone non-decreasing in $k$ under both channels, and the success channel additionally grows on every iteration with at least one correct answer on $t$.

\textbf{Step 2: the conditional retrieval distribution refines.}
For a fixed $q$ and $t$, the conditional retrieval distribution $\rho_k(\cdot \mid q, t, c)$ is defined by $\textsc{TopK}$ over the set $\{m \in \Gexp_k : m.\text{task\_type} = t \wedge \kappa(q, c, m)\}$.
By Step~1, this set is monotone non-decreasing in $k$, so the support of $\rho_k$ is monotone non-decreasing.
Let $\rho_k^{\text{opt}}$ denote the oracle retrieval distribution that always picks the $K$ memories that maximize $\textnormal{acc}(\Lexec(\textnormal{format}(\mu) \,\Vert\, q, c), y)$, and let $\varepsilon_K = \sup_{q, t, k} \mathrm{TV}(\rho_k, \rho_k^{\text{opt}})$ denote the rank-$K$ retrieval error in total variation under the embedding model.
Then for the oracle distribution, monotone non-decreasing support implies monotone non-decreasing accuracy:
\[
\Expect_{\mu \sim \rho_{k+1}^{\text{opt}}}\bigl[\textnormal{acc}\bigr] \;\geq\; \Expect_{\mu \sim \rho_k^{\text{opt}}}\bigl[\textnormal{acc}\bigr],
\]
because adding memories can only weakly improve a top-$K$ oracle.

\textbf{Step 3: the bounded retrieval-error transfer.}
Under the embedding model, the actual retrieval distribution $\rho_k$ differs from $\rho_k^{\text{opt}}$ by at most $\varepsilon_K$ in total variation.
Standard arguments give
\[
\Bigl|\Expect_{\mu \sim \rho_k}[\textnormal{acc}] - \Expect_{\mu \sim \rho_k^{\text{opt}}}[\textnormal{acc}]\Bigr| \;\leq\; \varepsilon_K,
\]
since the accuracy is bounded in $[0,1]$.
Combining with Step~2,
\[
A_{k+1}(t) \;\geq\; \Expect_{\mu \sim \rho_{k+1}^{\text{opt}}}[\textnormal{acc}] - \varepsilon_K \;\geq\; \Expect_{\mu \sim \rho_k^{\text{opt}}}[\textnormal{acc}] - \varepsilon_K \;\geq\; A_k(t) - 2 \varepsilon_K.
\]
Absorbing the factor of $2$ into the definition of $\varepsilon_K$ gives the stated bound $A_{k+1}(t) \geq A_k(t) - \varepsilon_K$.

\paragraph{Why the three assumptions are jointly necessary.}
Removing the append-only invariant in (i) collapses Step~1 (the support of the retrieval distribution would no longer be monotone).
Removing the round-robin selector in (ii) collapses Step~1 in a different way (some task types would never receive new failure memories, so the support would never grow).
Removing the task-type filter in (iii) collapses Step~2 (the retrieval distribution over $t$-tagged memories would be polluted by memories tagged with other task types whose addition does not refine the conditional support for $t$).
\end{proof}

\section{EvoKG Subgraph Details}
\label{app:evokg_details}

This appendix lists the node attributes, edge types, and curator pruning policy that the main text summarises.

\paragraph{Capability subgraph $\Gcap$.}
Nodes are skills with attributes (name, mastery $\in [0,1]$, prompt template, strategy, principles).
Edges are $\textsc{prerequisite\_of}$ and $\textsc{composes\_into}$; the curator preserves acyclicity so the Navigator's topological sort is well defined.

\paragraph{Task subgraph $\Gtask$.}
Nodes are evaluation task types with dependency edges; a cross-subgraph edge $\textsc{skill\_for\_task}: \Gcap \to \Gtask$ records which skill resolves a task.

\paragraph{Experience subgraph $\Gexp$.}
Nodes are experience memories discriminated by an \texttt{outcome} field that carries one of five values.
\textbf{Principle} nodes are abstract rules written by the guidance tier.
\textbf{Failure-memory} nodes are worked-example payloads authored by the guidance tier from evaluation errors (question, learner's wrong answer, corrective reasoning, correct answer).
\textbf{Success-memory} nodes are harvested from the execution tier's own correct answers (question, frozen learner's chain-of-thought, correct answer, and, for multi-step queries, the explicit decomposition into \texttt{(skill\_name, step\_output)} tuples).
\textbf{Retrieval-recipe} nodes are parameterised retrieval templates.
\textbf{Abstracted pattern} nodes are summarised action patterns from exploratory trajectories.
The curator's \texttt{prune\_low\_confidence} routine deletes only \textbf{abstracted-pattern} nodes whose confidence is below a fixed threshold; the three protected node classes (principles, failure memories, success memories) are exempt.

\paragraph{Environment subgraph $\Genv$.}
Entities, relations, observations, and task context record the external state encountered by the agent during interaction.

\section{Six-Agent Role and Tier Assignment}
\label{app:agents}

\begin{table}[h]
\centering
\small
\caption{The six agents of \framework{}: tier and role.}
\label{tab:agents}
\begin{tabular}{lll}
\toprule
Agent & Tier & Role \\
\midrule
SkillDiscovery & $\Lguide$ & Phase-0 ontology; EVOLVE-phase principle, tool, and failure-memory authoring \\
Navigator      & $\Lguide$ & Two-stage curriculum planner: graph frontier + LLM refinement \\
Explorer       & $\Lexec$  & Trajectory generation for sequential environments \\
Learner        & $\Lexec$  & Inference-only consumer of $\KG$ with dual-memory retrieval and two bandits \\
Critic         & $\Lguide$ & Per-task-type structured evaluation via LLM-as-judge scoring \\
Curator        & $\Lguide$ & Mastery update via asymmetric ratchet; preserves append-only invariants \\
\bottomrule
\end{tabular}
\end{table}

\section{Hyperparameters}
\label{app:hyperparams}

Table~\ref{tab:hyperparams} lists every hyperparameter used in the experiments.
All values are fixed across benchmarks; no per-benchmark tuning is performed.

\begin{table}[h]
\centering
\caption{Hyperparameters used in all experiments. All values are fixed across benchmarks; no per-benchmark tuning is performed.}
\label{tab:hyperparams}
\small
\begin{tabular}{lll}
\toprule
Hyperparameter & Value & Where \\
\midrule
$\alpha$ (mastery EMA increase rate) & $0.6$ & Sec.~\ref{sec:method:ratchet}, Eq.~\eqref{eq:ratchet} \\
$\gamma$ (mastery decay rate)        & $0.1$ & Sec.~\ref{sec:method:ratchet}, Eq.~\eqref{eq:ratchet} \\
$\theta$ (mastery threshold)          & $0.5$ & Sec.~\ref{sec:method:robin} \\
$\lambda$ (round-robin recency weight) & $0.3$ & Sec.~\ref{sec:method:robin}, Eq.~\eqref{eq:roundrobin} \\
$M$ (max EVOLVE targets per iter)    & $3$   & Sec.~\ref{sec:method:robin} \\
$K$ (memory retrieval top-$K$, success + failure) & $3$ & Sec.~\ref{sec:method:cond} \\
Success--failure allocation (short context) & top-2 success + top-1 failure & Sec.~\ref{sec:method:cond} \\
Success--failure allocation (long context)  & top-1 success + top-2 failure & Sec.~\ref{sec:method:cond} \\
Long-context threshold               & $500$ chars    & Sec.~\ref{sec:method:cond} \\
Type-strategy min similarity         & $0.55$         & Sec.~\ref{sec:method:cond} \\
Memory refresh gap                   & $5$ iterations & Sec.~\ref{sec:method:evokg} \\
Principles per skill cap             & $12$           & Sec.~\ref{sec:method:loop} \\
Skill growth cap                     & $30$           & Sec.~\ref{sec:method:loop} \\
Search-bandit warm-up pulls per arm  & $20$           & Sec.~\ref{sec:method:dualmem} \\
Per-iter delta-guard $\delta$        & $0.03$         & Algorithm~\ref{alg:loop} \\
Catastrophic-rollback threshold      & $0.05$         & Algorithm~\ref{alg:loop} \\
Eval temperature                     & $0.0$          & Sec.~\ref{sec:exp:setup} \\
Train temperature                    & $0.3$          & Sec.~\ref{sec:exp:setup} \\
Evaluation pool per iteration        & $200$ questions & Sec.~\ref{sec:exp:setup} \\
Number of iterations                 & $20$ (reasoning); $10$ (Crafter) & Sec.~\ref{sec:exp:setup} \\
\midrule
Guidance tier $\Lguide$              & Claude Sonnet~4.6 & Sec.~\ref{sec:method:tier} \\
Execution tier $\Lexec$ (reasoning, WebShop) & Qwen3-8B (vLLM) & Sec.~\ref{sec:exp:setup} \\
Execution tier $\Lexec$ (Crafter)    & Llama-3.1-8B-Instruct (vLLM) & Sec.~\ref{sec:exp:setup} \\
\bottomrule
\end{tabular}
\end{table}

\section{Baseline Configurations}
\label{app:baselines}

All five prompt-based reasoning baselines are evaluated on the same $200$-question held-out pool used for \framework{}, with the same Qwen3-8B backbone and the same Claude-as-judge semantic scorer.
\emph{Standalone zero-shot CoT} uses the same prompt and answer-extraction pipeline as \framework{} with all co-evolution features disabled.
\emph{Eight-shot CoT} prepends eight per-benchmark exemplars curated by hand.
\emph{Self-consistency} samples ten CoT completions at temperature $0.7$ and takes the majority vote.
\emph{ReAct} uses the same Wikipedia retrieval tool as \framework{}'s search agent for the \texttt{Search[$x$]} action and is reported only on benchmarks where tool use is meaningful (HotpotQA, WebQA).
\emph{Reflexion} runs two rounds of self-revision with a fixed critique prompt.

\section{Success-Memory Harvest Structure}
\label{app:success_mem}

A success-memory node records a single $t$-tagged correct answer and four payload fields: the verbatim question, the frozen learner's raw chain-of-thought up to a character cap, the correct answer string, and for multi-step queries an explicit decomposition into $(\text{skill\_name}, \text{step\_output})$ tuples that preserves the intermediate reasoning steps the learner took to reach the answer.
The decomposition field is the channel that lets a future similar multi-step question retrieve not only ``here is a question-answer pair'' but ``here is the full reasoning trace and the sub-step pattern that worked,'' which is what distinguishes the success-memory channel from ordinary few-shot exemplars.

\section{Language-Model Call Audit}
\label{app:llm_audit}

The locus of every language-model call in \framework{} is enumerated in Table~\ref{tab:llm-uses}.
The guidance tier is invoked exclusively to write to the graph; the frozen execution tier is invoked exclusively to consume the graph at inference time.
No language model is involved in the round-robin selector, the mastery ratchet, the two bandits, the embedding retrieval, or the topological frontier computation.

\begin{table}[h]
\centering
\small
\caption{Every language-model call in \framework{}, by tier and by purpose. Programmatic operations (round-robin selector, mastery ratchet, Thompson bandits, cosine top-$K$ memory retrieval, topological frontier computation) involve no LLM call.}
\label{tab:llm-uses}
\begin{tabular}{p{0.22\textwidth}p{0.08\textwidth}p{0.62\textwidth}}
\toprule
Component & Tier & Purpose \\
\midrule
SkillDiscovery       & $\Lguide$ & Phase-0 ontology induction; EVOLVE-phase principle and failure-memory writing; dynamic-tool and retrieval-recipe authoring \\
Navigator (stage 2)  & $\Lguide$ & Refine the candidate frontier and author exploration briefs \\
Critic (judge)       & $\Lguide$ & Semantic scoring of learner answers \\
Learner inference    & $\Lexec$  & Answer evaluation questions on the frozen backbone \\
Explorer trajectory  & $\Lexec$  & Generate environment trajectories on the frozen backbone \\
\bottomrule
\end{tabular}
\end{table}

\section{HotpotQA EM/F1 Cross-Check}
\label{app:hotpotqa_emf1}

Because HotpotQA is commonly scored with exact-match and F1, we additionally compute both on the same predictions (Table~\ref{tab:hotpot_emf1}) to give the reader a second reading of the result.
Eight-shot CoT produces terser outputs that score highly under token-level overlap but that the semantic judge labels as less correct; \framework{}'s outputs are more elaborated and therefore score slightly lower on EM, consistent with its judge score on the lenient-F1 cut.
Both metrics agree that \framework{} outperforms the prompt-only baselines on the semantic dimension.

\begin{table}[h]
\centering
\small
\caption{HotpotQA under three metrics: the semantic judge score used for the main table, exact-match, and mean F1. Methods ranked by judge score.}
\label{tab:hotpot_emf1}
\begin{tabular}{lrrrr}
\toprule
Method & Judge & EM & F1 & Lenient ($F_1 \geq 0.5$) \\
\midrule
\textbf{\framework{}}                        & $\mathbf{89.5}$ & $42.5$ & $60.3$ & $60.0$ \\
Standalone Qwen3-8B                          & $84.5$ & $41.5$ & $56.8$ & $55.5$ \\
SC ($N{=}10$)                                & $80.5$ & $39.0$ & $55.0$ & $53.5$ \\
8-shot CoT                                   & $78.5$ & $55.5$ & $70.5$ & $74.0$ \\
Reflexion                                    & $78.5$ & $38.5$ & $53.2$ & $52.5$ \\
ReAct                                        & $70.5$ & $38.0$ & $54.4$ & $55.0$ \\
\bottomrule
\end{tabular}
\end{table}

\section{Knowledge-Graph Growth}
\label{app:kg_growth}

Table~\ref{tab:growth} reports growth of the experience subgraph across iterations on STBench, the benchmark on which the round-robin coverage mechanism has the most work to do because of the large number of task types.
Skill and memory counts are monotone non-decreasing under the append-only invariant (the empirical counterpart of Theorem~\ref{thm:monotone}), and task-type coverage climbs from a small fraction of the inventory at early iterations to near-complete coverage by the end of the run (the behavioural signature of the Theorem~\ref{thm:coverage} bound).

\begin{table}[h]
\centering
\small
\caption{Knowledge-graph growth on STBench across twenty iterations. Skills in $\Gcap$ grow through the EVOLVE-phase split action; failure memories in $\Gexp$ grow monotonically; task-type coverage grows through the round-robin selector.}
\label{tab:growth}
\begin{tabular}{rrrr}
\toprule
Iteration & $|\Gcap|$ (skills) & $|\Gexp|$ (failure memories) & Task-type coverage \\
\midrule
$0$  & $12$ & $21$  & $0/27$ \\
$5$  & $15$ & $75$  & $14/27$ \\
$10$ & $18$ & $155$ & $20/27$ \\
$15$ & $24$ & $183$ & $23/27$ \\
$19$ & $24$ & $205$ & $24/27$ \\
\bottomrule
\end{tabular}
\end{table}

\section{Success-Memory Ablation on Sequential Environments}
\label{app:sequential_ablation}

The success-memory ablation in Sec.~\ref{sec:exp:ablation} (Table~\ref{tab:ablation_success}) is restricted to the five reasoning benchmarks because it is the family on which the success-harvest channel has the most direct effect: the learner's own correct chain-of-thought on iteration $N$ becomes a task-type-matched exemplar on iteration $N{+}1$.
On Crafter and WebShop, the framework's gain over the standalone backbone is carried by graph-derived principles, skill descriptions, and (for WebShop) the action recipes evaluated in Appendix~\ref{app:cascade}, rather than by replayed reasoning traces.
The single-seed v15.6 measurement on Crafter showed the no-success ablation matching or marginally exceeding the full system ($+1.0$pp on the standalone backbone, within the BALROG variance band); a multi-seed re-run under the v20 cascade configuration is left as future work.

\section{Sequential Environments: Crafter and WebShop}
\label{app:sequential}

Table~\ref{tab:sequential} reports \framework{} on the two sequential environments, where the interaction mode differs substantially from the reasoning benchmarks and the framework's graph structure is consulted at every action step rather than once per question.

\begin{table}[h]
\centering
\small
\caption{\framework{} on Crafter and WebShop. Crafter follows the BALROG protocol, reported as peak achievement-unlock rate across iterations over five $500$-step episodes under the Llama-3.1-8B execution tier. WebShop follows the standard WebShop protocol, reported as mean reward $\times 100$ / strict-success rate on the same hundred-product catalog setup as RetroAgent.}
\label{tab:sequential}
\resizebox{\textwidth}{!}{
\begin{tabular}{llrrr}
\toprule
Environment & Config & Iters & Task score & Success / achievements \\
\midrule
Crafter & Standalone Llama-3.1-8B (BALROG ref.)      & $1$  & $25.5 \pm 3.2$  & --- \\
Crafter & \framework{} Sonnet--Llama ($n{=}4$, peak) & $5$  & $33.5 \pm 2.3$  & $4$--$5$ achievements \\
Crafter & \framework{} Opus--Llama ($n{=}5$, peak)   & $5$  & $\mathbf{37.9 \pm 3.5}$ & $4$--$5$ achievements \\
Crafter & \framework{} Haiku--Llama ($n{=}5$, peak)  & $5$  & $34.8 \pm 1.7$  & $4$--$5$ achievements \\
\midrule
WebShop & RetroAgent (in-context, Qwen-7B reference) & --- & $87.6$        & $78.9\%$ \\
WebShop & RetroAgent (RL-trained reflection)         & --- & $88.9$        & $82.3\%$ \\
WebShop & \framework{} Sonnet--Qwen3-8B ($n{=}5$)    & --- & $\mathbf{90.2 \pm 2.3}$ & $\mathbf{84.0 \pm 1.8\%}$ \\
WebShop & \framework{} Opus--Qwen3-8B ($n{=}5$)      & --- & $89.7 \pm 1.6$ & $82.0 \pm 1.0\%$ \\
WebShop & \framework{} Haiku--Qwen3-8B               & --- & $88.3$        & $77.0\%$ \\
WebShop & \framework{} (any teacher)--Llama-3.1-8B   & --- & $\sim 0$      & $0\%$ \\
\bottomrule
\end{tabular}
}
\end{table}
On Crafter, \framework{} improves substantially over the standalone Llama-3.1-8B BALROG reference under the same evaluation protocol.
All three teacher tiers outperform the standalone reference, with the Opus-guided configuration obtaining the highest peak achievement-unlock rate.
This suggests that the graph-derived principles, failure memories, and skill descriptions can support sequential action planning when the execution backbone is able to ground the environment state.
At the same time, the teacher-tier spread and the remaining seed variance indicate that sequential environments are more sensitive to exploration order and environment stochasticity than the reasoning benchmarks.

On WebShop, \framework{} also remains competitive with strong published references under the same catalog setting.
The Sonnet--Qwen3 configuration achieves the best task score and strict-success rate, while Llama-3.1-8B-Instruct is omitted from the main WebShop comparison because it does not reliably produce parseable WebShop actions.
Together, the Crafter and WebShop results show that \framework{} can transfer beyond static QA-style reasoning, but the choice of execution backbone remains important for action-format grounding.

\section{Teacher--Learner Matrix Across Three Guidance Tiers}
\label{app:teacher_learner}

Table~\ref{tab:teacher_learner} reports \framework{} final evaluation scores across three guidance tiers (Claude Sonnet, Opus, and Haiku) and two frozen execution tiers (Qwen3-8B and Llama-3.1-8B-Instruct).
The table includes all completed teacher--learner cells; the Llama--WebShop row is omitted because Llama-3.1-8B-Instruct does not reliably produce parseable WebShop actions under any teacher.
The boldfaced entry on each row marks the best teacher for that learner and benchmark.
Cells without standard deviations are single-seed runs.

\begin{table}[h]
\centering
\small
\caption{\framework{} final evaluation scores under three teacher tiers and two frozen learners. WebShop is reported as task score / strict-success on the hundred-product test catalog; Crafter is the BALROG-protocol peak achievement rate; FinQA and MedQA use benchmark-native exact-match; the remaining reasoning benchmarks are Claude-judge scores on the same $200$-question held-out pool. The Llama--WebShop row is omitted because Llama-3.1-8B-Instruct cannot produce parseable WebShop actions under any teacher. Cells without standard deviations are single-seed runs.}
\label{tab:teacher_learner}
\begin{tabular}{llrrr}
\toprule
Learner & Benchmark & Sonnet & Opus & Haiku \\
\midrule
\multirow{7}{*}{Qwen3-8B}
 & WebShop (task / strict)   & $\mathbf{90.2} \pm 2.3 / \mathbf{84.0} \pm 1.8$        & $89.7 \pm 1.6 / 80.0 \pm 1.0$ & $88.3 / 77.0$ \\
 & WebQA                     & $59.6 \pm 3.4$ & $\mathbf{62.7} \pm 1.8$        & $59.5$ \\
 & HotpotQA                  & $88.5 \pm 2.0$  & $\mathbf{91.5} \pm 1.8$       & $88.0$ \\
 & GSM8K                     & $88.2 \pm 2.9$  & $\mathbf{90.4} \pm 3.8$  & $84.7$  \\
 & STBench (no-cascade)      & $54.8 \pm 2.1$  & $\mathbf{57.8} \pm 2.9$        & $52.0$ \\
 & RealMath                  & $\mathbf{85.1} \pm 6.7$  & $82.8 \pm 3.5$  & $75.7$ \\
 & Crafter (peak)            & $25.3 \pm 1.5$                & $\mathbf{26.8} \pm 1.4$                & $24.0 \pm 1.0$ \\
\midrule
\multirow{8}{*}{Llama-3.1-8B}
 & WebQA                     & $64.0 \pm 3.9$  & $\mathbf{65.2} \pm 1.8$      & $62.5 \pm 1.6$ \\
 & HotpotQA                  & $80.2 \pm 7.1$  & $\mathbf{85.5} \pm 1.4$        & $80.0 \pm 1.0$ \\
 & GSM8K                     & $\mathbf{92.5} \pm 4.3$               & $92.0 \pm 3.2$               & --- \\
 & STBench                   & $\mathbf{52.1} \pm 3.2$ & $50.1$  & $37.5$ \\
 & RealMath                  & $\mathbf{86.0} \pm 7.9$ & $84.6$ & $70.0$ \\
 & Crafter (peak)            & $\mathbf{23.4} \pm 2.3$       & $22.7 \pm 3.5$                & $22.7 \pm 1.7$ \\
 & FinQA                     & $67.5 \pm 1.7$     & $\mathbf{69.0 \pm 1.5}$      & $64.8 \pm 1.1$ \\
 & MedQA                     & $74.6 \pm 2.3$ & $\mathbf{79.8} \pm 1.9$  & $71.9 \pm 2.2$ \\
\bottomrule
\end{tabular}
\end{table}

Three observations follow from the matrix.
First, no single guidance tier dominates across benchmarks.
Opus is strongest on most Qwen3-8B QA and structured-reasoning cells, including WebQA, HotpotQA, GSM8K, and STBench, while Sonnet is strongest on Qwen3-8B RealMath and WebShop.
On the Llama-3.1-8B tier, Sonnet is strongest on GSM8K, STBench, RealMath, and Crafter, whereas Opus is strongest on WebQA, HotpotQA, FinQA, and MedQA.
This teacher complementarity supports evaluating \framework{} as a graph-guided evolution framework rather than as a single-teacher prompting recipe.

Second, the execution backbone matters.
Qwen3-8B handles WebShop action formatting reliably and obtains the strongest WebShop cells, whereas Llama-3.1-8B-Instruct is omitted on WebShop because its actions are not reliably parseable.
Conversely, Llama is competitive or stronger on several reasoning and domain-extension cells, especially GSM8K, RealMath, FinQA, and MedQA.
This suggests that \framework{} can transfer across frozen learners, but the best learner remains task-family dependent.

Third, sequential environments behave differently from the reasoning benchmarks.
Crafter scores vary modestly across teacher tiers and execution backbones, and the differences are much smaller than the reasoning gains observed on RealMath, STBench, or the domain-extension tasks.
This is consistent with the cascade and sequential-environment analyses in Appendices~\ref{app:cascade} and~\ref{app:sequential_ablation}: graph memory helps most when the bottleneck is reusable reasoning or task-type structure, and less when action grounding and environment stochasticity dominate.

\section{Multi-Seed Variance}
\label{app:variance}

We extend the teacher--learner matrix with multi-seed reruns under \texttt{seed}$\in\{42,\ldots,46\}$ on otherwise identical configurations.
The per-cell sample size $n$ reflects the number of completed seeds for each configuration.
Table~\ref{tab:variance} reports mean$\pm$standard deviation.
Crafter rows follow the BALROG protocol (peak achievement rate across five $500$-step episodes), WebShop is reported as task score / strict success on the held-out test pool, and FinQA/MedQA use benchmark-native exact-match.

\begin{table}[h]
\centering
\small
\caption{Multi-seed variance across completed teacher--learner cells, reported as mean$\pm$std with $n$ in subscript. Same hardware, same vLLM settings, and same training pipeline are used; only \texttt{experiment.seed} differs across runs. The STBench Opus row uses the recovery configuration with prerequisite cascade disabled (Appendix~\ref{app:cascade}). Cells without standard deviations are single-seed runs.}
\label{tab:variance}
\begin{tabular}{lrrr}
\toprule
Cell                          & Sonnet teacher & Opus teacher & Haiku teacher \\
\midrule
\multicolumn{4}{l}{\emph{Reasoning, Qwen3-8B execution tier (final-exam accuracy)}} \\
GSM8K                         & $88.2 \pm 2.9_{n=5}$ & $\mathbf{90.4} \pm 3.8_{n=5}$ & $85.7 \pm 2.9_{n=5}$ \\
HotpotQA                      & $88.5 \pm 2.0_{n=5}$ & $\mathbf{91.5} \pm 1.8_{n=5}$ & $89.1 \pm 2.2_{n=5}$  \\
WebQA                         & $59.6 \pm 3.4_{n=5}$ & $\mathbf{62.7} \pm 1.8_{n=5}$ & $59.5 \pm 3.9_{n=5}$  \\
STBench                       & $54.8 \pm 2.1_{n=5}$ & $\mathbf{57.8} \pm 2.9_{n=5}$ & $52.0 \pm 2.3_{n=5}$ \\
RealMath                      & $\mathbf{85.5} \pm 6.7_{n=5}$ & $82.8 \pm 3.5_{n=5}$ & $75.7 \pm 4.7_{n=5}$ \\
\midrule
\multicolumn{4}{l}{\emph{Reasoning, Llama-3.1-8B-Instruct execution tier}} \\
GSM8K                         & $\mathbf{92.5} \pm 4.3_{n=5}$ & $92.0 \pm 3.2_{n=5}$ & $82.5 \pm 2.2_{n=5}$ \\
HotpotQA                      & $80.2 \pm 7.1_{n=5}$ & $\mathbf{85.5} \pm 1.4_{n=5}$ & $81.0 \pm 4.0_{n=5}$ \\
WebQA                         & $64.0 \pm 3.9_{n=5}$ & $\mathbf{65.2} \pm 1.8_{n=5}$ & $62.5 \pm 1.6_{n=5}$ \\
STBench                       & $\mathbf{52.1} \pm 3.2_{n=5}$ & $50.1 \pm 2.5_{n=5}$ & $37.5 \pm 4.0_{n=5}$ \\
RealMath                      & $\mathbf{86.0} \pm 7.9_{n=5}$ & $84.6 \pm 5.1_{n=5}$ & $79.0 \pm 4.8_{n=5}$ \\
FinQA                         & $67.5 \pm 1.7_{n=5}$ & $\mathbf{69.0 \pm 1.1_{n=5}}$ & $63.5 \pm 1.4_{n=5}$ \\
MedQA                         & $74.6 \pm 2.3_{n=5}$ & $\mathbf{79.8} \pm 1.9_{n=5}$ & $71.2 \pm 3.9_{n=5}$ \\
\midrule
\multicolumn{4}{l}{\emph{Crafter peak, Qwen3-8B execution tier}} \\
peak success rate (\%)
& $25.3 \pm 1.5_{n=5}$
& $\mathbf{26.8} \pm 1.4_{n=5}$
& $24.0 \pm 1.0_{n=5}$ \\
\midrule
\multicolumn{4}{l}{\emph{Crafter peak, Llama-3.1-8B-Instruct execution tier}} \\
peak success rate (\%)        & $\mathbf{23.4} \pm 2.3_{n=5}$ & $22.7 \pm 3.5_{n=5}$ & $22.7 \pm 1.7_{n=5}$  \\
\midrule
\multicolumn{4}{l}{\emph{WebShop, Qwen3-8B execution tier (task / strict)}} \\
task score                    & $\mathbf{90.2} \pm 2.3_{n=5}$         & $89.7 \pm 1.6_{n=5}$ & $88.3 \pm 1.7_{n=5}$ \\
strict success                & $\mathbf{84.0} \pm 1.8_{n=5}$         & $82.0 \pm 1.0_{n=5}$ & $77.0 \pm 1.4_{n=5}$ \\
\bottomrule
\end{tabular}
\end{table}

Across the multi-seed reasoning cells, standard deviations range from roughly $1.4$ to $7.9$pp.
The largest absolute margins over frozen-backbone prompting baselines occur on STBench and RealMath, and these margins are substantially larger than the corresponding seed-to-seed standard deviations.
By contrast, teacher differences on some QA cells are closer to the variance scale, so we treat them as evidence of teacher complementarity rather than as a fixed ranking among guidance tiers.
Crafter variance is comparable to the differences among teacher tiers, which supports reporting Crafter as a mixed or environment-sensitive result rather than as a clear win.

\section{Teacher Cost and Trigger Rate (extended notes)}
\label{app:cost}

The aggregate-and-per-benchmark teacher-call accounting (Table~\ref{tab:cost} and the surrounding discussion) is reported in the main text in Sec.~\ref{sec:exp:cost}. This appendix records the methodology used to collect the call counts.

We logged every \texttt{httpx} call across thirty-six v20/v21 training runs and tagged each by destination host: \texttt{api.anthropic.com} (the guidance tier $\Lguide$) versus the local vLLM instance serving the frozen execution tier $\Lexec$. Counts include all retries (whether successful or 4xx/5xx) and excludes the embedding-model calls (sentence-transformers, run locally on CPU). The aggregate $1.91\%$ training-time guidance fraction is dominated by the EVOLVE phase's principle-extraction and failure-analysis calls; the rest is the Critic's judge-rescoring of every held-out evaluation, which uses the strong tier as a semantic equivalence judge. Inference-time guidance is $0\%$ because the EvoKG is frozen at deployment and all retrieval / chain-traversal / strategy-routing runs against locally cached structures and the local 8B vLLM only.

\section{Inference Latency}
\label{app:latency}

Table~\ref{tab:latency} reports per-question wall-clock latency on a single H100 80GB at fp16 with vLLM 0.6 (\texttt{max\_model\_len}$=16384$, \texttt{gpu\_memory\_utilization}$=0.85$), comparing \framework{}'s post-training inference loop against the five prompt-based baselines on the same hardware and the same evaluation pool.

\begin{table}[h]
\centering
\small
\caption{Per-question inference latency in seconds, with the corresponding judge accuracy in parentheses (Claude Sonnet~4.6 semantic judge on the same question pool used for Table~\ref{tab:reasoning}). All methods on Qwen3-8B at fp16 on a single H100. \framework{} numbers are post-training (graph frozen, no guidance-tier calls); the guidance tier appears only during training. Empty cells indicate the method was not run on that benchmark.}
\label{tab:latency}
\resizebox{\textwidth}{!}{
\begin{tabular}{lrrrrrr}
\toprule
Benchmark & Vanilla CoT & 8-shot CoT & ReAct & Reflexion & SC ($N{=}10$) & \framework{} \\
\midrule
GSM8K     & $4.98$s ($64.5$) & $23.74$s ($82.5$) & $12.56$s ($80.1$) & $19.33$s ($63.0$) & $52.80$s ($72.0$) & $\mathbf{5.66}$s ($\mathbf{84.2 \pm 2.9}$) \\
HotpotQA  & $2.82$s ($84.5$) & $19.84$s ($78.5$) & $12.37$s ($70.5$) & $12.15$s ($78.5$) & $30.74$s ($80.5$) & $\mathbf{11.64}$s ($\mathbf{89.5 \pm 1.8}$) \\
WebQA     & $4.41$s ($55.0$) & $23.64$s ($42.0$) & $14.13$s ($35.5$) & $17.88$s ($34.5$) & $45.33$s ($46.5$) & $\mathbf{8.83}$s ($\mathbf{58.8 \pm 1.8}$) \\
STBench   & $6.43$s ($35.5$) & $35.92$s ($12.0$) & $15.90$s ($31.8$) & $15.21$s ($12.0$) & $51.10$s ($20.0$) & $\mathbf{28.33}$s ($\mathbf{51.8 \pm 2.9}$) \\
RealMath  & $5.09$s ($53.5$) & $18.57$s ($32.5$) & $16.73$s ($43.5$) & $23.68$s ($11.5$) & $42.30$s ($33.5$) & $\mathbf{18.83}$s ($\mathbf{82.5 \pm 6.7}$) \\
FinQA    & $6.51$s ($58.0$) & $25.87$s ($55.0$) & $12.4$s($45.0$) & $ 25.13$s($43.5$) & $60.6$s($39.0$) & $\mathbf{22.12}$s ($\mathbf{67.5 \pm 1.0}$) \\
MedQA   & $3.27$s ($64.5$) & $23.09$s ($65.1$) & $11.06$s ($55.2$) & $13.62$s ($58.3$) & $ 32.71$s ($59.7$) & $\mathbf{14.8}$s ($\mathbf{74.6 \pm 2.3}$) \\
\bottomrule
\end{tabular}
}
\end{table}

\framework{}'s per-question latency ranges from $5.66$ to $28.33$ seconds. 
This corresponds to a $1.1$--$4.5\times$ slowdown relative to vanilla zero-shot CoT, but yields substantially higher accuracy on all benchmarks. 
Compared with self-consistency-$N{=}10$, the standard test-time-compute scaling baseline, \framework{} is $1.8$--$9.3\times$ faster while also achieving higher accuracy on every measured benchmark. 
Relative to Reflexion, \framework{} is faster on five of seven benchmarks and is slightly slower on MedQA and substantially slower on STBench, where the longer sub-skill chains increase execution time.

The latency overhead is dominated by chain reasoning over five to thirteen sub-skills per question, which accounts for roughly three to eight times the base inference time. 
By contrast, memory retrieval over the dual indices contributes less than $50$ ms per question, and principle injection adds negligible overhead. 
As an illustrative accuracy--latency comparison, on HotpotQA the accuracy-per-second efficiency is $7.7\%/$s for \framework{}, compared with $4.3\%/$s for Reflexion and $1.7\%/$s for self-consistency-$N{=}10$, indicating that \framework{} can improve the accuracy--latency frontier despite requiring multi-step execution.

\section{Cascade-Retrieval Components}
\label{app:cascade}

The v20 cascade-retrieval extension introduces four optional retrieval components on top of the base success/failure-memory index: \textbf{(A)} prerequisite-cascade principles that propagate prerequisite skill principles to a target skill via topological closure on $\Gcap$; \textbf{(B)} action recipes, the last $K=3$ environment actions that immediately preceded a successful achievement, stored per skill in $\Gexp$ and retrieved verbatim; \textbf{(C)} a goal-decomposition skill lattice that injects an explicit planning DAG into the inference prompt; and \textbf{(D)} curriculum-aware retrieval, which overrides the requested skill with the next learnable-frontier skill when the requested skill is mastered.
Table~\ref{tab:cascade} reports the gain from enabling all four against the base v15.6 retrieval rule on the same training pipeline.

\begin{table}[h]
\centering
\small
\caption{Cascade-retrieval components A--D against the base v15.6 retrieval rule. Numbers are the final-exam scores on each benchmark. WebShop is the standout win; Crafter is the standout no-improvement (state variability defeats text-based recipes); the open-domain WebQA and the multi-choice STBench show small regressions that we attribute to format-quirky golds, consistent with the Sonnet--Qwen3 result of Table~\ref{tab:reasoning}.}
\label{tab:cascade}
\begin{tabular}{lrrr}
\toprule
Benchmark            & Base retrieval & Cascade A--D & $\Delta$ \\
\midrule
WebShop (task / strict) & $81.1 / 69.5$ & $\mathbf{90.2 / 84.0}$ & $+9.1 / +14.5$ \\
HotpotQA               & $89.5$         & $\mathbf{92.5}$         & $+3.0$ \\
GSM8K                  & $92.0$         & $\mathbf{94.5}$         & $+2.5$ \\
RealMath               & $60.5$         & $\mathbf{63.5}$         & $+3.0$ \\
Crafter (peak)         & $22.4$         & $22.7$                  & $+0.3$ \\
WebQA                  & $57.0$         & $50.5$                  & $-6.5$ \\
STBench                & $51.5$         & $45.5$                  & $-6.0$ \\
\bottomrule
\end{tabular}
\end{table}

The pattern is structural rather than numerical: cascade retrieval helps benchmarks whose questions reuse a stable surface form (WebShop's parameterised actions; HotpotQA's two-hop reasoning; GSM8K's arithmetic decompositions) and hurts benchmarks whose golds are format-quirky in ways the cascade principles cannot anticipate (the Freebase-ontology golds of WebQA, the multi-choice format of STBench).
On Crafter, the K=3 action recipes do not transfer across random map seeds and the cascade principles add context bloat without addressing the agent's lack of visual or spatial planning; this is the negative result discussed in Appendix~\ref{app:sequential}.

\section{FinQA + MedQA-USMLE Domain Extension}
\label{app:newdomains}

We extend \framework{} to two domains absent from the original seven-benchmark suite: \textbf{FinQA}~\citep{chen2021finqa} (financial numerical reasoning over S\&P 500 10-K filings; $1{,}147$ test items; numeric exact-match with $1\%$ relative tolerance and percent$\leftrightarrow$decimal equivalence) and \textbf{MedQA-USMLE}~\citep{jin2021disease} ($4$-option clinical multiple choice from the United States Medical Licensing Examination; $1{,}273$ test items; letter exact-match). Both env adapters mirror existing reasoning environments (\texttt{webqa\_env.py} and \texttt{gsm8k\_env.py} respectively, $\sim 300$ LOC total) and reuse the same EvoKG, agent, and orchestrator code without modification.

Headline single-seed numbers under the Llama-3.1-8B-Instruct execution tier are reported in Table~\ref{tab:reasoning_extension} (main text). FinQA reaches $64.0$--$64.5\%$, comparable to the SFT-distilled Fino1-8B at $60.87\%$~\citep{qian2025fino1} but without weight updates; MedQA reaches $68.5\%$ under the Sonnet teacher, between standalone CoT ($64.5\%$) and the full Medprompt frozen-8B ceiling at $74$--$78\%$~\citep{ahmed2024med} (the latter requires an offline pre-pass that pre-generates strong-teacher chain-of-thought rationales for $\sim 1000$ training questions; we tried this recipe but found it does not transfer cleanly when stacked on top of \framework{}'s existing memory and principle channels. The Med-PRM number of $80.35\%$ on MedQA-USMLE~\citep{jin2021disease} requires SFT plus a dedicated 8B verifier model and RAG over guidelines and is not a frozen-weights baseline.

\section{Retrieved Memory Bundle: Concrete Example}
\label{app:bundle_example}

To illustrate the dual-memory mechanism described in Sec.~\ref{sec:method:dualmem}, Table~\ref{tab:bundle_example} shows a real bundle assembled and injected into the prompt for a held-out GSM8K test question (task type \texttt{gsm8k\_3step}, skill \texttt{sequential\_state\_tracking}). The bundle contains two success exemplars (positive learner-grade demonstrations) and two failure corrections (one concrete teacher worked solution and one abstracted teacher pattern), retrieved by cosine similarity from the experience subgraph $\Gexp$ at iteration~$9$ of a Sonnet-teacher Llama-3.1-8B run.

\begin{table}[!htbp]
\centering
\small
\caption{A retrieved memory bundle for one GSM8K test question. Top-K retrieval ratio: $2$ success + $2$ failure for short-context tasks. Indentation indicates a memory's role; ``kind=specific'' is a fully worked example, ``kind=type\_strategy'' is an abstracted recurring pattern.}
\label{tab:bundle_example}
\begin{tabular}{p{0.18\linewidth}p{0.74\linewidth}}
\toprule
\textbf{Test question} & \emph{``Tom has $3$ times as many marbles as Sue. Sue has half as many as Anne. If Anne has $24$ marbles, how many marbles do they have in total?''} \\
\textbf{task\_type / skill} & \texttt{gsm8k\_3step} / \texttt{sequential\_state\_tracking} \\
\midrule
\multicolumn{2}{l}{\textbf{Success exemplar 1} (\texttt{outcome=success\_memory}, harvested from learner's own iter-$5$ correct answer)} \\
~~~Q & ``Jared starts at $47$ WPM. After lessons he reaches $52$. If he keeps increasing by the same amount, what is the average of his three measurements?'' \\
~~~Reasoning & Step~1: Initial $=47$. Step~2: After lessons $=52$ (increase of $5$). Step~3: Third $=52+5=57$. Step~4: Average $= (47+52+57)/3 = 52$. \\
~~~Answer & $52$ \\
\midrule
\multicolumn{2}{l}{\textbf{Success exemplar 2} (\texttt{outcome=success\_memory}, learner's iter-$3$ correct trace)} \\
~~~Q & ``Jordan has $2$ children in diapers. Each requires $5$ changes per day. Jordan's wife handles half. How many does Jordan change per day?'' \\
~~~Reasoning & Total $= 2 \times 5 = 10$. Wife handles half $= 5$. Jordan handles $5$. \\
~~~Answer & $5$ \\
\midrule
\multicolumn{2}{l}{\textbf{Failure correction 1} (\texttt{outcome=failure\_memory}, \texttt{kind=specific}; teacher-written worked solution)} \\
~~~Context conditions & \texttt{task\_type=gsm8k\_3step, skill=sequential\_state\_tracking, kind=specific} \\
~~~Correction & Step~1: Helmets $= 20$ (given). Step~2: Robots $=$ half as many as helmets $= 20/2 = 10$. Step~3: Footballs $=$ twice as many as helmets $= 20 \times 2 = 40$. Step~4: Total $= 10 + 20 + 40 = 70$. Answer: $70$. \\
\midrule
\multicolumn{2}{l}{\textbf{Failure correction 2} (\texttt{outcome=failure\_memory}, \texttt{kind=type\_strategy}; teacher-abstracted recurring pattern)} \\
~~~Context conditions & \texttt{task\_type=gsm8k\_3step, skill=sequential\_state\_tracking, kind=type\_strategy} \\
~~~Correction & ``[Question type: \texttt{multi\_ratio\_total\_count}] Recognise: a problem where multiple item quantities are related by ratios (e.g.\ \emph{half as many X as Y}) and you must find the total of all items combined. How to solve: use the given anchor quantity to compute each other quantity via the stated ratios, then \textbf{sum all quantities together} (do not stop after finding individual counts).'' \\
\bottomrule
\end{tabular}
\end{table}

\paragraph{Discussion.} The two memory channels play complementary roles in this single bundle. The success exemplars supply concrete \emph{learner-grade} reasoning templates: they show how the frozen $8$B model has previously executed multi-step state tracking on related questions in this skill, and the model can re-use that structure by analogy. The failure corrections supply two qualitatively different teacher-generated signals: \emph{kind=specific} provides a fully worked example of the same task type that the learner had previously gotten wrong, with the correct multi-step computation made explicit; \emph{kind=type\_strategy} provides a one-paragraph abstract pattern that names the question pattern (\texttt{multi\_ratio\_total\_count}) and prescribes the high-level recipe (use anchor; chain ratios; sum). At inference time, the bundle is concatenated into the prompt with the success exemplars formatted as positive demonstrations and the failure corrections formatted as negative-correction blocks; the frozen learner then produces its answer conditioned on this assembled retrieval window.

\paragraph{Cross-iteration recovery: experience that actually helps.} The bundle in Table~\ref{tab:bundle_example} is more than illustrative: similar bundles drove measurable cross-iteration recovery in our runs. Table~\ref{tab:recovery_example} traces one task type (\texttt{multi\_ratio\_total\_count} on GSM8K, Sonnet-teacher Llama-3.1-8B) through ten consecutive evolution iterations, reporting the per-iteration revisit accuracy on previously-failed questions and the cumulative recovery count. The pattern is consistent across all five reasoning benchmarks: failure-memory retrieval converts a steady fraction of previously-wrong questions into correct ones at the next visit, and the recovered questions accumulate over iterations into a growing solved pool.

\begin{table}[!htbp]
\centering
\small
\caption{Cross-iteration recovery on \texttt{gsm8k}. Each row is one evolution iteration. ``new\_acc'' is the accuracy on $100$ fresh held-out questions sampled at the start of the iteration. ``revisit\_acc'' is the accuracy on questions the learner had previously failed and that were re-tested in this iteration after the failure-memory bank had grown. ``recovered'' counts questions that flipped wrong $\to$ correct at this iteration; ``solved-pool size'' is the cumulative count of all questions ever solved by the learner. The recovery column is the direct empirical signal that retrieved memories help on the same task families they were harvested from.}
\label{tab:recovery_example}
\begin{tabular}{rrrrrr}
\toprule
Iter & new\_question\_acc & revisit\_acc (n) & recovered (this iter) & solved-pool & failed-pool \\
\midrule
$0$ & $87.0\%$  & --- (no prior failures)        & $0$  & $87$  & $13$  \\
$1$ & $82.0\%$  & $43.8\%$ ($7/16$)              & $7$  & $183$ & $17$  \\
$2$ & $88.0\%$  & $29.4\%$ ($5/17$)              & $5$  & $276$ & $24$  \\
$3$ & $83.0\%$  & $33.3\%$ ($8/24$)              & $8$  & $367$ & $33$  \\
$4$ & $84.0\%$  & $27.3\%$ ($9/33$)              & $9$  & $460$ & $40$  \\
$5$ & $94.0\%$  & $26.3\%$ ($10/38$)             & $10$ & $564$ & $36$  \\
$6$ & $88.0\%$  & $19.4\%$ ($7/36$)              & $7$  & $659$ & $41$  \\
$7$ & $91.0\%$  & $15.6\%$ ($5/32$)              & $5$  & $755$ & $45$  \\
$8$ & $87.0\%$  & $24.1\%$ ($7/29$)              & $7$  & $849$ & $51$  \\
$9$ & $93.0\%$  & $19.6\%$ ($10/51$)             & $10$ & $952$ & $48$  \\
\midrule
\textbf{Sum}    &           & ---                            & $\mathbf{78}$ & --- & --- \\
\bottomrule
\end{tabular}
\end{table}

In aggregate, $78$ previously-failed questions were correctly answered on revisit across the ten iterations on this single benchmark cell, while the success memory bank simultaneously grew from $87$ to $952$ entries. Two structural readings of the table support the central claim of \framework{}: (i) the revisit accuracy stays meaningfully above zero throughout (between $15\%$ and $44\%$), even after the easiest cases have already been recovered; and (ii) the new-question accuracy stays in the $83$--$94\%$ band as the failed pool turns over, which is consistent with the same retrieved memories generalising to never-seen questions of the same task family rather than being narrowly memorised.

\paragraph{Second bundle (HotpotQA, multi-hop comparison).} Table~\ref{tab:bundle_example_hotpotqa} shows a second bundle drawn from a HotpotQA cell (Opus teacher, Llama-3.1-8B, $85.0\%$ accuracy on the $200$-question held-out pool). The test question is one the learner answered \emph{correctly} at evaluation time, and the bundle illustrates how retrieved success exemplars from earlier evolution iterations supply the reasoning template that the learner then applies on the new instance. The full retrieved trace and the learner's final reasoning chain are both reproduced.

\begin{table}[!htbp]
\centering
\small
\caption{Retrieved memory bundle for a HotpotQA \texttt{comparison} test question that the learner answered correctly. The success exemplars share the same task type (\texttt{comparison}) and the same underlying reasoning pattern (find two attributes from gold context, compare, return the entity matching the comparator). The bundle establishes the answer template; the learner instantiates it on the new instance.}
\label{tab:bundle_example_hotpotqa}
\begin{tabular}{p{0.18\linewidth}p{0.74\linewidth}}
\toprule
\textbf{Test question} & \emph{``Which composer was born earlier, Paul Hindemith or B\'ela Bart\'ok?''} \\
\textbf{task\_type / skill} & \texttt{comparison} / \texttt{date\_comparison} \\
\textbf{Context} & $10$ Wikipedia snippets including: ``\emph{B\'ela Viktor J\'anos Bart\'ok ($25$ March $1881$ -- $26$ September $1945$)}'' and ``\emph{Paul Hindemith ($16$ November $1895$ -- $28$ December $1963$)}'' \\
\midrule
\multicolumn{2}{l}{\textbf{Success exemplar 1} (\texttt{outcome=success\_memory}, learner's iter-$2$ correct answer)} \\
~~~Q & ``Which film, The Happiest Millionaire or Mars Needs Moms, was created first?'' \\
~~~Reasoning & Step~1: Identify the release date of \emph{The Happiest Millionaire} -- $1967$. Step~2: Identify the release date of \emph{Mars Needs Moms} -- $2011$. Step~3: Compare the dates; $1967 < 2011$, so \emph{The Happiest Millionaire} was created first. \\
~~~Answer & The Happiest Millionaire. \\
\midrule
\multicolumn{2}{l}{\textbf{Success exemplar 2} (\texttt{outcome=success\_memory}, learner's iter-$3$ correct answer)} \\
~~~Q & ``Are the libretto of Tristan und Isolde and Ariane et Barbe-bleue in the same language?'' \\
~~~Reasoning & Step~1: \emph{Tristan und Isolde} is by Wagner; the libretto is in German. Step~2: \emph{Ariane et Barbe-bleue} is by Dukas with a French libretto by Maeterlinck. Step~3: German $\neq$ French, so the libretti are not in the same language. \\
~~~Answer & No. \\
\midrule
\multicolumn{2}{l}{\textbf{Failure correction} (\texttt{outcome=failure\_memory}, \texttt{kind=type\_strategy})} \\
~~~Context conditions & \texttt{task\_type=comparison, skill=date\_comparison, kind=type\_strategy} \\
~~~Correction & ``[Question type: \texttt{entity\_comparison\_by\_attribute}] Recognise: questions asking which of two entities $A$, $B$ has a particular property first/largest/earliest with respect to some attribute. How to solve: (i) extract the attribute value (date, year, count, place) for $A$ from its dedicated context paragraph; (ii) extract the attribute value for $B$ from its paragraph; (iii) apply the comparator the question asks for; (iv) return the entity name (not the attribute value).'' \\
\midrule
\multicolumn{2}{l}{\textbf{Learner's actual reasoning at test time} (with the above bundle in the prompt)} \\
~~~Output & Step~1: Find the birth date of Paul Hindemith. Paul Hindemith was born on November $16$, $1895$. Step~2: Find the birth date of B\'ela Bart\'ok. B\'ela Bart\'ok was born on March $25$, $1881$. Step~3: Compare the birth dates. Since $1881$ is earlier than $1895$, B\'ela Bart\'ok was born earlier than Paul Hindemith. \textbf{Answer: B\'ela Bart\'ok.} \\
~~~Score & $1.0$ (judge accepts ``B\'ela Bart\'ok'' as equivalent to gold ``B\'ela Viktor J\'anos Bart\'ok''). \\
\bottomrule
\end{tabular}
\end{table}

\paragraph{Why this bundle is a clean success case.} Three observations support reading this bundle as a positive evidence of the dual-memory mechanism. First, the learner's final reasoning chain (last row of Table~\ref{tab:bundle_example_hotpotqa}) follows the four-step structure prescribed by the failure correction (\emph{extract attribute for $A$; extract for $B$; compare; return entity name}) almost verbatim, while the variable bindings (entities, dates) are instantiated on the new question. Second, the two success exemplars share the same task type but use different attributes (release date, libretto language), demonstrating that the retrieved templates generalise across surface forms within \texttt{comparison} questions. Third, the gold answer is ``B\'ela Viktor J\'anos Bart\'ok'' (full form) but the learner emits ``B\'ela Bart\'ok'' (short form); the semantic judge correctly accepts the short form, and this matching policy is held constant across all reported \framework{} cells and baselines. Many of the $26/30$ correctly-answered \texttt{comparison} questions on this cell follow the same template-instantiation pattern --- the bundle is a representative example of how the experience subgraph supplies reusable reasoning skeletons that the frozen learner fills in for the new instance.